\newcommand{\zerodisplayskips}{%
  \setlength{\abovedisplayskip}{4pt}%
  \setlength{\belowdisplayskip}{4pt}%
  \setlength{\abovedisplayshortskip}{1pt}%
  \setlength{\belowdisplayshortskip}{1pt}}
\appto{\normalsize}{\zerodisplayskips}
\appto{\small}{\zerodisplayskips}
\appto{\footnotesize}{\zerodisplayskips}
\titlespacing\section{-5pt}{0pt plus 2pt minus 2pt}{0pt plus 2pt minus 2pt}
\titlespacing\subsection{-5pt}{0pt plus 2pt minus 2pt}{0pt plus 2pt minus 2pt}
\titlespacing\subsubsection{-5pt}{0pt plus 2pt minus 2pt}{0pt plus 2pt minus 2pt}
\theoremstyle{plain}
\icmltitlerunning{Understanding Gradual Domain Adaptation}
\begin{document}

\twocolumn[
\icmltitle{Understanding Gradual Domain Adaptation:\\Improved Analysis, Optimal Path and Beyond}




\begin{icmlauthorlist}
\icmlauthor{Haoxiang Wang}{uiuc}
\icmlauthor{Bo Li}{uiuc}
\icmlauthor{Han Zhao}{uiuc}
\icmlaffiliation{uiuc}{University of Illinois at Urbana-Champaign, Urbana, IL, USA}

\end{icmlauthorlist}

\icmlcorrespondingauthor{Haoxiang Wang}{hwang264@illinois.edu}

\icmlkeywords{Machine Learning, ICML}

\vskip 0.15in
]



\printAffiliationsAndNotice{}  

\begin{abstract}
The vast majority of existing algorithms for unsupervised domain adaptation (UDA) focus on adapting from a labeled source domain to an unlabeled target domain directly in a one-off way. Gradual domain adaptation (GDA), on the other hand, assumes a path of ($T\mathrm{-}1$) unlabeled intermediate domains bridging the source and target, and aims to provide better generalization in the target domain by leveraging the intermediate ones. Under certain assumptions, \citet{kumar2020understanding} proposed a simple algorithm, \textit{gradual self-training}, along with a generalization bound in the order of $e^{\mathcal O(T)}(\eps_0 \mathrm{+}\mathcal O\bigl(\sqrt {\frac{\log T}{n}}~\bigr) \bigr)$ for the target domain error, where $\eps_0$ is the source domain error and $n$ is the data size of each domain. Due to the exponential factor, this upper bound becomes vacuous when $T$ is only moderately large. In this work, we analyze gradual self-training under more general and relaxed assumptions, and prove a significantly improved generalization bound as $\eps_0\mathrm{+}\cO\bigl(T\Delta \mathrm{+} \frac{T}{\sqrt{n}}\bigr) \mathrm{+} \widetilde{\mathcal O}\bigl(\frac{1}{\sqrt{nT}}\bigr)$, where $\Delta$ is the average distributional distance between consecutive domains. Compared with the existing bound with an \emph{exponential} dependency on $T$ as a \textit{multiplicative} factor, our bound only depends on $T$ \emph{linearly and additively}. Perhaps more interestingly, our result implies the existence of an optimal choice of $T$ that minimizes the generalization error, and it also naturally suggests an optimal way to construct the path of intermediate domains so as to minimize the accumulative path length $T\Delta$ between the source and target. To corroborate the implications of our theory, we examine gradual self-training on multiple semi-synthetic and real datasets, which confirms our findings. We believe our insights provide a path forward toward the design of future GDA algorithms. 
\end{abstract}

\section{Introduction}
It is well known that machine learning models are generally not robust to distribution shifts \citep{sagawa2021extending,koh2021wilds,hendrycks2021natural,gulrajani2021in}. As a result, models trained on one data distribution may have a severe performance drop on test data with a large distribution shift. Unsupervised domain adaptation (UDA) aims to tackle this challenge by adapting models to the test distribution with the help of unlabelled data from the target domain~\citep{ganin2016domain,long2015learning,tzeng2017adversarial,zhao2018adversarial}. 

\begin{figure}[t!]
\begin{center}
\centerline{\includegraphics[width=.8\columnwidth]{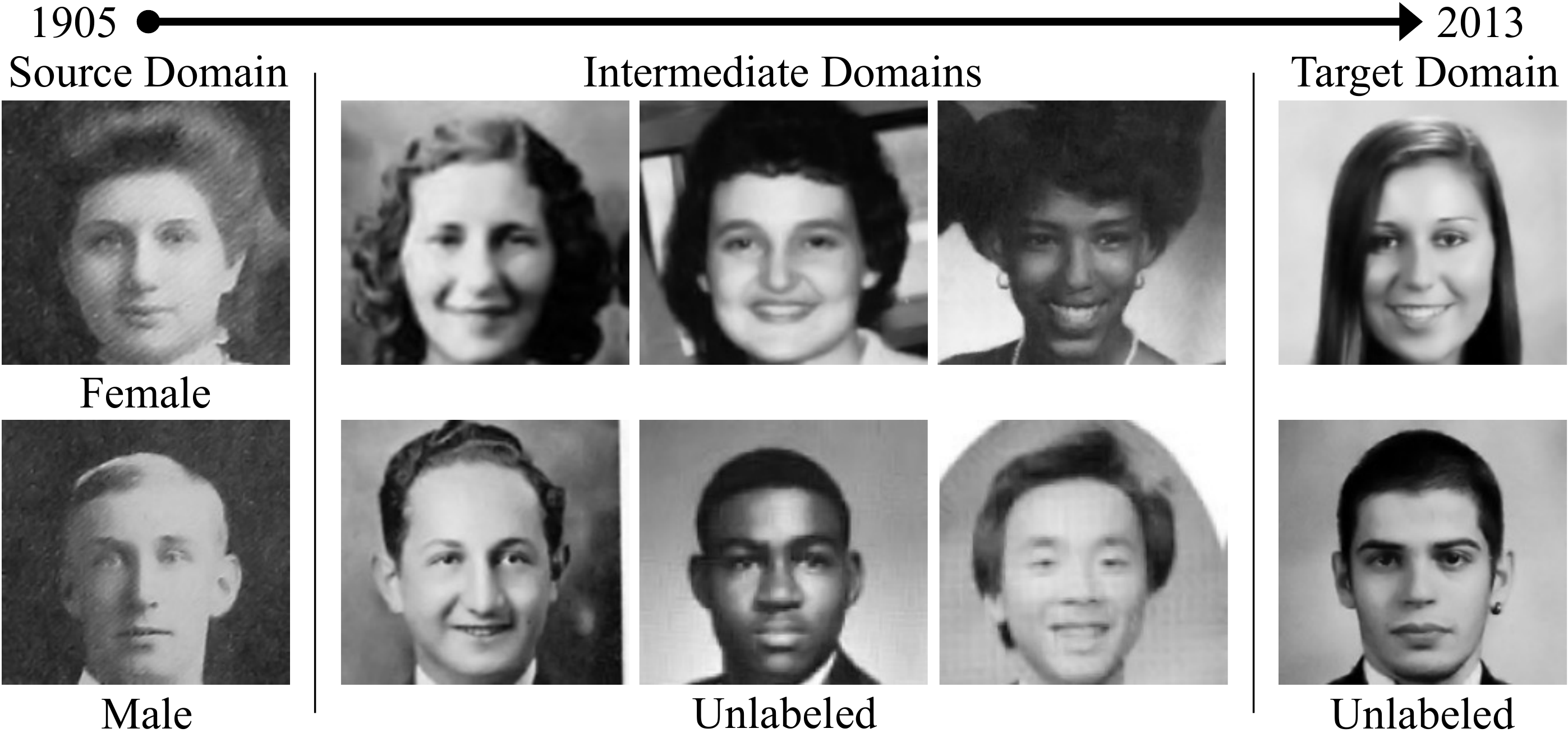}}
\vskip -.1in
\caption{An example of gradual domain adaptation on Portraits \citep{ginosar2015century}, a historical image dataset of US high school yearbook. Given labeled data from a source domain, models are adapted to the target domain, with the help of unlabeled data from intermediate domains gradually shifting from the source to target.}
\vskip -.35in
\label{fig:portraits}
\end{center}
\end{figure}

In general, UDA considers a source domain and a target domain, in which the model is trained and tested, respectively. There is a discrepancy between the two domains, while they share certain similarities. Prior theoretical results show that the generalization error of UDA increases with the larger discrepancy between two domains~\cite{ben-david2010theory,zhao2019domain}. However, empirically, due to the potentially large shift between these two domains, it may be hard to adapt to the target domain in a one-off fashion, and it is observed that existing UDA algorithms do not perform well~\citep{kumar2020understanding,sagawa2021extending,abnar2021gradual} under large shifts. 

Intuitively, one can expect that existing adaptation algorithms perform better under smaller shifts. Thus, one natural idea is to divide a large shift into multiple smaller shifts to mitigate the distribution shift issue. This ``divide-and-conquer'' idea has brought up to a setting known as \textit{gradual domain adaptation} (GDA) \citep{hoffman2014continuous,gadermayr2018gradual,bobu2018adapting,wulfmeier2018incremental,wang2020continuously,kumar2020understanding,abnar2021gradual,chen2021gradual}, where the learner has access to additional unlabeled data from some intermediate domains that \textit{gradually} shift from the source to the target. GDA first fits a model to the source domain, then adapts it to a series of intermediate domains sequentially, and the ultimate goal is to generalize in the target domain. In this way, the large distribution shift is segmented into multiple smaller pieces between neighboring intermediate domains. Notably, in many real-world applications, the data distribution indeed changes gradually over time, distance, or environments \citep{hoffman2014continuous,farshchian2018adversarial,kumar2020understanding,koh2021wilds,malinin2021shifts,zhao2021pyhealth}, in which the unlabelled intermediate domain data are available or can be easily acquired.

\citet{kumar2020understanding} proposes \textit{Gradual Self-Training}, a simple yet effective GDA algorithm, which iteratively applies self-training (ST) on unlabelled data from intermediate domains. Briefly speaking, self-training (i.e., pseudo-labeling) is a popular semi-supervised learning technique that fine-tunes models with self-generated pseudo-labels on unlabelled data \citep{yarowsky1995unsupervised,lee2013pseudo,xie2020self}. The power of gradual self-training is empirically validated on synthetic and real datasets by \citet{kumar2020understanding}, and the authors also provide a theory to demonstrate the improvement of gradual self-training over baselines (i.e., source training only and vanilla self-training) in the presence of large shifts. However, the error bound of gradual self-training provided by \citet{kumar2020understanding} is quite pessimistic and unrealistic in a sense. Given source error $\eps_0$ and $T$ intermediate domains each with $n$ unlabelled data, the bound of \citet{kumar2020understanding} scales as $e^{\mathcal O(T)}\bigl(\eps_0+\cO \bigl(\sqrt {\log (T)/n}\bigr)\bigr)$, which grows exponentially in $T$. This indicates that the more intermediate domains for adaptation, the worse performance that gradual self-training would obtain in the target domain. In contrast, people have empirically observed that a relatively large $T$ is beneficial for gradual domain adaptation \citep{abnar2021gradual,chen2021gradual}. 

Given the sharp gap between existing theory and empirical observations of gradual domain adaptation, we attempt to address the following important and fundamental questions:
\begin{quote}
\itshape
    For gradual domain adaptation, given the source domain and target domain, how does the number of intermediate domains impact the target generalization error? Is there an optimal choice for this number? If yes, then how to construct the optimal path of intermediate domains?
\vspace{-.3em}
\end{quote}
To answer these questions, we focus on gradual self-training \citep{kumar2020understanding}, a representative gradual domain adaptation algorithm, and carry out a novel analysis drastically different from \citet{kumar2020understanding}. Notably, our setting is more general than that of \citet{kumar2020understanding}, in the sense that 1) we have a milder assumption on the distribution shift, 2) we put almost no restriction on the loss function, and 3) our technique applies to all the $p$-Wasserstein distance metrics. As a comparison, existing analysis is restricted to ramp loss\footnote{Ramp loss can be seen as a truncated hinge loss so that it is bounded and more amenable for technical analysis.} and only applies to the $\infty$-Wasserstein metric. At a high level, we first focus on analyzing a pair of consecutive domains, and upper bound the error difference of any classifier over domains bounded by their $p$-Wasserstein distance; then, we telescope this lemma to the entire path over a sequence of domains, and finally obtain an error bound for gradual self-training: $\eps_0\mathrm{+}\cO\bigl(T\Delta \mathrm{+} \frac{T}{\sqrt{n}}\bigr) \mathrm{+} \widetilde{\mathcal O}\bigl(\frac{1}{\sqrt{nT}}\bigr)$, where $\Delta$ is the average $p$-Wasserstein distance between consecutive domains. \looseness=-1

Interestingly, our bound indicates the existence of an optimal choice of $T$ that minimizes the generalization error, which could explain the success of moderately large $T$ used in practice. Notably, the $T\Delta$ in our bound could be interpreted as the length of the path of intermediate domains bridging the source and target, suggesting that one should also consider minimizing the path length $T \Delta$ in practices of gradual domain adaptation. For example, given fixed source and target domains, the path length $T\Delta$ is minimized as the intermediate domains are distributed along some Wasserstein geodesic between the source domain and target domain. We believe these insights on $T$ and $\Delta$ obtained from our error bound are helpful to construct an optimal path of intermediate domains in bridging the source domain and target domain. \looseness=-1

Empirically, we examine gradual self-training on two synthetic datasets (color-shift MNIST \& rotated MNIST) and two real dataset (Portraits \citep{ginosar2015century} \& CoverType \citep{CoverType}). Our experiments validate the insights of our theory: there indeed exists an optimal choice of $T$, and the intermediate domains should be chosen to minimize $T\Delta$. Our empirical observation sheds new light on the importance of constructing the optimal path connecting the source domain and target domain in gradual domain adaptation, and we hope our insight could inspire the design of future gradual domain adaptation algorithms.
\section{Related Work}\label{sec:related-works}

\textbf{Self-Training}~
Self-training, i.e., pseudo-labeling, is a popular semi-supervised learning approach \citep{yarowsky1995unsupervised,lee2013pseudo}, which fine-tunes trained classifiers with pseudo-labels predicted on unlabelled data. There are some common techniques that can enhance self-training, e.g., adding noise to inputs or networks \citep{xie2020self,FixMatch}, progressively assigning pseudo-labels on unlabelled data with high prediction confidence, or applying a strong regularization \citep{arazo2020pseudo,pham2021meta}.

\textbf{Unsupervised Domain Adaptation}~
Unsupervised domain adaptation (UDA) focuses on adapting models trained on labeled data of a source domain to a target domain with unlabelled data. A number of approaches have been proposed for UDA in recent years. Invariant representation learning is one of the most popular approaches \citep{deepCORAL,zhao2019domain}, where adversarial training is usually adopted to learn feature representations invariant between source and target domains \citep{ajakan2014domain,ganin2016domain}. In addition, self-training (i.e., pseudo-labelling) is also adapted for UDA \citep{liang2019distant,liang2020we,zou2018unsupervised,zou2019confidence}. In summary, pseudo-labels of unlabelled target domain data are generated by source-trained classifiers, and they are used to further fine-tune the trained classifiers. Notably, the performance of self-training degrades significantly when there is a large distribution shift between the source and target. \looseness=-1

\textbf{Gradual Domain Adaptation}~
Gradual domain adaptation (GDA) introduces extra intermediate domains to the existing source and target domains of UDA. In general, the intermediate domains shift from the source to the target gradually, such as rotation or time evolution \citep{kumar2020understanding}, and GDA iteratively adapts models from the source to the target along the sequence of intermediate domains. The idea of GDA has been empirically explored in computer vision \citep{gadermayr2018gradual,hoffman2014continuous,wulfmeier2018incremental} with various domain-specific algorithms. Recently, \citet{kumar2020understanding} proposes a general machine learning algorithm for GDA, \textit{gradual self-training}, which outperforms vanilla self-training on several synthetic and real datasets. Moreover, \citet{kumar2020understanding} provides a generalization error bound for gradual self-training, which is the first theoretical guarantee for GDA. In parallel, \citet{wang2020continuously} proposes an adversarial adaptation algorithm for GDA. Later, \citet{abnar2021gradual} proposes a variant of gradual self-training that does not need intermediate domain data, since it could generate pseudo-data for intermediate domains. In GDA, the intermediate domains are given (i.e., ordering domains based on their distance to the source/target domain), \citet{chen2021gradual} removes the requirement of the domain order by proposing a method called Intermediate Domain Labeler (IDOL). \citet{chen2021gradual} shows gradual self-training with IDOL could obtain good performance even without the domain order. Recently, \citet{zhou2022online} proposed an algorithm under the teacher-student paradigm with active query strategy to tackle the gradual adaptation problem. \citet{dong2022algorithms} studied a slightly different setting where labeled data is also available during the intermediate domains. In addition, one may notice that GDA has a setting similar to temporal domain generalization \citep{koh2021wilds,ye2022future}, while the latter has available labels in intermediate domains instead.

\section{Preliminaries}\label{sec:prelim}

\textbf{Notation}~ $\cX,\cY$ denote the input and the output space, and $X,Y$ denote random variables taking values in $\cX,\cY$. In this work, each domain has a data distribution $\mu$ over $\cX \times \cY$, thus it can be written as $\mu = \mu(X,Y)$. When we only consider samples and disregard labels, we use $\mu(X)$ to refer to the sample distribution of $\mu$ over the input space $\cX$.

\subsection{Problem Setup}\label{sec:setup}
\paragraph{Binary Classification}
In this work, we focus on binary classification with labels $\{-1,1\}$. Also, we consider $\mathcal Y$ as a compact space in $\bR$.
\paragraph{Gradually shifting distributions} We have $T\mathrm{+}1$ domains indexed by $\{0,1,...,T\}$, where domain $0$ is the source domain, domain $T$ is the target domain and domain $1,\dots,T\mathrm{-}1$ are the intermediate domains. These domains have distributions over $\cX \times \cY$, denoted as $\mu_{0}, \mu_{1}, \ldots, \mu_{T}$.
\paragraph{Classifier and Loss}
Consider the hypothesis class as $\mathcal H $ and the loss function as $\ell$.
We define the population loss of classifier $h\in \mathcal H$ in domain $t$ as
\begin{align*}
    \eps_{t}(h)\equiv \eps_{\mu_t}(h) \triangleq  \E_{\mu_t}[\ell (h(x),~y)] = \E_{x,y \sim \mu_t}[\ell (h(x),~y)]
\end{align*}
\paragraph{Gradual Domain Adaptation}
In the standard setting of unsupervised domain adaptation (UDA) \citep{zhao2019domain}, a model is trained with $n_0$ labeled data of the source domain and $n$ unlabelled data of the target domain, and it is evaluated by labeled test data of the target domain. In gradual domain adaptation (GDA) \citep{kumar2020understanding}, the model is given additional $T-1$ sequentially indexed intermediate domains, each with $n$ unlabelled data. GDA algorithms usually train the model in the source, then adapt it sequentially over the intermediate domains toward the target. Same as UDA, models trained by GDA are also evaluated by labeled test data of the target domain.

We make a mild assumption on the input data below, which can be easily achieved by data preprocessing. This assumption is common in machine learning theory works \citep{cao2019generalization,fine-grained,rakhlin2014notes}.
\begin{assumption}[Bounded Input Space]\label{assum:input-bound}
Consider the input space $\mathcal X$ is compact and bounded in the $d$-dimensional unit $L_2$ ball, i.e., $\cX \subseteq \{x \in \bR^d: \|x\|_{2}\leq 1\}$. 
\end{assumption}

To quantify distribution shifts between domains, we adopt the well-known Wasserstein distance metric in the Kantorovich formulation \citep{kantorovich1939}, which is widely used in the optimal transport literature \citep{villani2009optimal}.
\begin{definition}[$p$-Wasserstein Distance] Consider two measures $\mu$ and $\nu$ over $\mathbb S \subseteq \bR^d$. For any $p\geq 1$, their $p$-Wasserstein distance is defined as 
\begin{align}
    W_{p}(\mu, \nu):=\left(\inf _{\gamma \in \Gamma(\mu, \nu)} \int_{\mathbb S \times \mathbb S} d(x, y)^{p} \mathrm{~d} \gamma(x, y)\right)^{1 / p}
\end{align}
where $ \Gamma(\mu, \nu)$ is the set of all measures over $\mathbb S \times \mathbb S$ with marginals equal to $\mu$ and $\nu$ respectively.
\end{definition}

In this paper, we consider $p$ as a preset constant satisfying $p\geq 1$. Then, we can use the $p$-Wasserstein metric to measure the distribution shifts between consecutive domains.

\begin{definition}[Distribution Shifts] For $t=1,\dots,T$, denote
\begin{align}
    \Delta_t = W_p(\mu_{t-1}, \mu_{t})
\end{align}
Then, we define the average of distribution shifts between consecutive domains as
\begin{align}
    \Delta = \frac{1}{T} \sum_{t=1}^{T} \Delta_t
\end{align}
\end{definition}

\textbf{Remarks on Wasserstein Metrics}~ The $p$-Wasserstein metric has been widely adopted in many sub-areas of machine learning, such as generative models \citep{WGAN,WAE} and domain adaptation \citep{courty2014domain,courty2016optimal,courty2017joint,redko2019optimal}. Most of these works use $p=1$ or $2$, which is known to be good at quantifying many real-world data distributions \citep{peyre2019computational}. However,  the analysis in \citet{kumar2020understanding} only applies to $p=\infty$, which is uncommon in practice and can lead to a loose upper bound due to the monotonicity property of $W_p$. In fact, for some pairs of data distributions that look close to each other, the $W_\infty$ distance may even become unbounded while $W_1$ and $W_2$ distances are small (e.g., with a few outlier data). 

\subsection{Gradual Self-Training}

The vanilla self-training algorithm (denoted as $\mathrm{ST}$) adapts classifier $h$ with empirical risk minimization (ERM) over pseudo-labels generated on an unlabelled dataset $S$, i.e.,
\begin{align}\label{eq:ST}
    h' = \mathrm{ST}(h,S) = \argmin_{f\in \mathcal H} \sum_{x\in S} \ell(f(x), h(x))
\end{align}
where $h(x)$ represents pseudo-labels provided by the trained classifier $h$, and $h'$ is the new classifier fitted to the pseudo-labels. The technique of hard labelling (i.e., converting $h(x)$ to one-hot labels) is used in some practices of self-training \citep{xie2020self,van2020survey}, which can be viewed as adding a small modification to the loss function $\ell$.

In gradual domain adaptation, we assume a set of $n$ unlabelled data from each intermediate domain and the target domain. Namely, any domain $t\in \{1,\dots, T\}$ has a set of $n$ unlabelled data i.i.d. sampled from $\mu_t$, denoted as $S_t$. For simplicity, we assume the number of labeled data in the source domain is also equal to $n$.

Gradual self-training \citep{kumar2020understanding}, applies self-training to the intermediate domains and the target domain successively, i.e., for $t = 1,\dots, T$,
\begin{align}\label{eq:gradual-ST}
    h_t = \mathrm{ST}(h_{t-1},S_t) = \argmin_{f\in \mathcal H} \sum_{x\in S_t} \ell(f(x), h_{t-1}(x))
\end{align}
where $h_0$ is the model fitted on the source data. $h_T$ is the final trained classifier that is expected to enjoy a low population error in the target domain, i.e., $\eps_{T}$.

\section{Theoretical Analyses}
In this section, we theoretically analyze gradual self-training under assumptions more relaxed than \citet{kumar2020understanding}, and obtain a significantly improved error bound. Our theoretical analysis is roughly split into two steps: (i) we focus on a pair of arbitrary consecutive domains with bounded distributional distance, and upper bound the prediction error difference of any classifier in the two domains by the distributional distance (\cref{lemma:error-diff}); (ii) we view gradual self-training from an online learning perspective, and adopt tools in the online learning literature to analyze the algorithm together with results of step (i), leading to an upper bound (\cref{thm:gen-bound}) of the target generalization error of gradual self-training. Notably, our bound provides several profound insights on the optimal path of intermediate domains used in gradual domain adaptation (GDA), and also sheds light on the design of GDA algorithms. The proofs of all theoretical statements are provided in \cref{supp:proof}.

\subsection{Error Difference over Distribution Shift}\label{sec:error-diff}

Intuitively, gradual domain adaptation (GDA) splits the large distribution shift between the source domain and target domain into smaller shifts that are segmented by intermediate domains. Thus, in the view of reductionism \citep{anderson1972more}, one should understand what happens in a pair of consecutive domains in order to comprehend the entire GDA mechanism.

To start, we adopt three assumptions from the prior work~\citep{kumar2020understanding}\footnote{Assumption \ref{assum:Lipschitz-loss} is not explicitly made by \citet{kumar2020understanding}. Instead, they directly assume the loss function to be ramp loss, which is a more strict assumption than our Assumption \ref{assum:Lipschitz-loss}.}.

\begin{assumption}[$R$-Lipschitz Classifier]\label{assum:Lipschitz-model}
We assume each classifier $h \in \cH$ is $R$-Lipschitz in $\ell_2$ norm, i.e., $\forall x,x' \in \cX$,
\begin{align*}
    |h(x) - h(x')| \leq R \|x-x'\|_2
\end{align*}
\end{assumption}
\begin{assumption}[$\rho$-Lipschitz Loss]\label{assum:Lipschitz-loss}
We assume the loss function $\ell$ is $\rho$-Lipschitz, i.e., $\forall y,y' \in \cY$,
\begin{align}\label{eq:rademacher-assum}
    |\ell(y,\cdot) - \ell(y',\cdot)| \leq \rho \|y-y'\|_2\\ 
    |\ell(\cdot,y) - \ell(\cdot,y')| \leq \rho \|y-y'\|_2
\end{align}
\end{assumption}
\begin{assumption}[Bounded Model Complexity\footnote{This assumption is actually reasonable and not strong. For example, under Assumption \ref{assum:input-bound} and \ref{assum:Lipschitz-model}, linear models directly satisfy \eqref{eq:rademacher-assum}, as proved in \citep{kumar2020understanding,liang2016cs229t}.}]\label{assum:bounded-complexity} 
We assume the Rademachor complexity \citep{bartlett2002rademacher}, $\cR$, of the hypothesis class, $\cH$, is bounded for any distribution $\mu$ considered in this paper. That is, for some constant $B > 0$,
\begin{align*}
\cR_n(\cH; \mu) = \E\left[\sup_{h \in \cH}\frac{1}{n} \sum_{i=1}^n \sigma_i h(x_i) \right]\leq \frac{B}{\sqrt{n}}
\end{align*}
where the expectation is w.r.t. $x_i \sim \mu(X)$ and $\sigma_i \sim \mathrm{Uniform}(\{-1,1\})$ for $i=1,\dots,n$.
\end{assumption}

With these assumptions, we can bound the population error difference of a classifier between a pair of shifted domains in the following proposition. The proof is in \cref{supp:proof:error-diff}.
\begin{lemma}[Error Difference over Shifted Domains]\label{lemma:error-diff}
Consider two arbitrary measures $\mu, \nu$ over $\cX \times \cY$. Then, for arbitrary classifier $h$ and loss function $l$ satisfying Assumption \ref{assum:Lipschitz-model}, \ref{assum:Lipschitz-loss}, the population loss of $h$ on $\mu$ and $\nu$ satisfies
\begin{align}\label{eq:thm:additive-bound:main}
    |\eps_{\mu}(h) - \eps_{\nu}(h)| & \leq \rho \sqrt{R^2 + 1} ~W_p(\mu, \nu)
\end{align}
where $W_p$ is the Wasserstein-$p$ distance metric and $p\geq 1$.
\end{lemma}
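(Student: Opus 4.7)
The plan is to pass to an optimal coupling, apply the Lipschitz assumptions pointwise on the coupled pairs, and then collapse the two Lipschitz bounds into the constant $\rho\sqrt{R^2+1}$ via Cauchy--Schwarz. Concretely, let $\gamma \in \Gamma(\mu,\nu)$ be any coupling of $\mu$ and $\nu$ on $(\cX\times \cY)^2$. Since $\mu$ and $\nu$ are the marginals of $\gamma$, I can rewrite the difference of expected losses as a single expectation:
\begin{align*}
\eps_\mu(h) - \eps_\nu(h) = \E_{((x,y),(x',y'))\sim \gamma}\bigl[\ell(h(x),y) - \ell(h(x'),y')\bigr].
\end{align*}
Taking absolute values and pushing them inside, it suffices to bound $|\ell(h(x),y) - \ell(h(x'),y')|$ pointwise on the support of $\gamma$.

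For the pointwise bound I would add and subtract $\ell(h(x'),y)$ and use Assumption \ref{assum:Lipschitz-loss} on each piece, followed by Assumption \ref{assum:Lipschitz-model}:
\begin{align*}
|\ell(h(x),y) - \ell(h(x'),y')| &\leq \rho\,|h(x)-h(x')| + \rho\,\|y-y'\|_2 \\
&\leq \rho R\,\|x-x'\|_2 + \rho\,\|y-y'\|_2.
\end{align*}
Now I want to fold this into a single Euclidean distance on $\cX\times\cY$. Viewing $(\rho R,\rho)$ and $(\|x-x'\|_2,\|y-y'\|_2)$ as two-vectors and applying Cauchy--Schwarz gives
\begin{align*}
\rho R\,\|x-x'\|_2 + \rho\,\|y-y'\|_2 \leq \rho\sqrt{R^2+1}\cdot\sqrt{\|x-x'\|_2^2 + \|y-y'\|_2^2} = \rho\sqrt{R^2+1}\cdot d\bigl((x,y),(x',y')\bigr),
\end{align*}
where $d$ is the Euclidean metric on $\cX\times\cY\subseteq \bR^{d+1}$ underlying the Wasserstein distance in the statement.

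Combining these steps yields
\begin{align*}
|\eps_\mu(h) - \eps_\nu(h)| \leq \rho\sqrt{R^2+1}\,\E_\gamma\bigl[d((x,y),(x',y'))\bigr].
\end{align*}
For $p=1$, taking the infimum over $\gamma$ immediately produces $W_1(\mu,\nu)$. For general $p\geq 1$, I would apply Jensen's inequality to the concave map $t\mapsto t^{1/p}$ (equivalently, the fact that $\|\cdot\|_{L^1(\gamma)} \leq \|\cdot\|_{L^p(\gamma)}$) to obtain
\begin{align*}
\E_\gamma[d]\leq \bigl(\E_\gamma[d^p]\bigr)^{1/p},
\end{align*}
and then take the infimum over couplings to recover $W_p(\mu,\nu)$. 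This gives the stated inequality.

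The argument is short and each piece is standard, so I do not expect a serious obstacle; the only place requiring care is the Cauchy--Schwarz step, where the specific packaging $(\rho R,\rho)\cdot(\|x-x'\|_2,\|y-y'\|_2)$ is what produces the exact constant $\rho\sqrt{R^2+1}$ in the statement. A suboptimal grouping (e.g.\ bounding each term separately by the full Wasserstein distance) would give the weaker constant $\rho(R+1)$, so matching the claimed bound depends on keeping the two Lipschitz contributions inside a single Euclidean norm before invoking the definition of $W_p$.
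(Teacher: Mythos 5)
Your proposal is correct and follows essentially the same route as the paper's proof: pass to an arbitrary coupling $\gamma\in\Gamma(\mu,\nu)$, bound $|\ell(h(x),y)-\ell(h(x'),y')|$ pointwise by $\rho R\|x-x'\|_2+\rho\|y-y'\|_2$ via Assumptions \ref{assum:Lipschitz-loss} and \ref{assum:Lipschitz-model}, and then take the infimum over couplings and use $W_1\leq W_p$. The only (cosmetic) difference is in extracting the constant: the paper bounds both coefficients by $\sqrt{R^2+1}$ and keeps the sum metric $\|x-x'\|_2+\|y-y'\|_2$ as the ground distance, whereas your Cauchy--Schwarz step yields the Euclidean ground metric $\sqrt{\|x-x'\|_2^2+\|y-y'\|_2^2}$ followed by $L^1(\gamma)\leq L^p(\gamma)$ --- a marginally tighter packaging of the same argument, valid under either reading of the (unspecified) metric on $\cX\times\cY$.
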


Eq. \eqref{eq:gradual-ST} depicts each iteration of gradual self-training with an past classifier $h_t$ and a new one $h_{t+1}$, which are fitted to $S_{t}$ and $S_{t+1}$, respectively. Naturally, one might be curious about how well the performance of $h_{t+1}$ in domain $t\mathrm{+}1$ is compared with $h_{t}$ in domain $t$. We answer this question as follows, with proof in Appendix \ref{supp:proof:algo-stability}.
\begin{proposition}[The stability of the ST algorithm]\label{prop:algorithm-stability}
Consider two arbitrary measures $\mu,\nu$, and denote $S$ as a set of $n$ unlabelled samples i.i.d. drawn from $\mu$. Suppose $h\in \cH$ is a pseudo-labeler that provides pseudo-labels for samples in $S$. Define $\hat h \in \cH$ as an ERM solution fitted to the pseudo-labels,
\vspace{-.8em}
\begin{align}
    \hat h = \argmin_{f \in \mathcal{H}} \sum_{x\in S} \ell(f(x), h(x))
\end{align}
Then, for any $\delta \in (0,1)$, the following bound holds true with probability at least $1-\delta$,
\vspace{-1em}
\begin{align}\label{eq:algo-stability}
    \bigl|\eps_{\mu}(\hat h) \mathrm{-}\eps_{\nu} (h) \bigl| \leq \cO\biggl(W_p(\mu,\nu) \mathrm{+} \frac{\rho B\mathrm{+}\sqrt{\log\frac 1 \delta }}{\sqrt n}~\biggr)
\end{align}
\end{proposition}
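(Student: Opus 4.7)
The plan is a triangle-inequality split: writing
\[
|\eps_\mu(\hat h) - \eps_\nu(h)| \leq |\eps_\mu(\hat h) - \eps_\mu(h)| + |\eps_\mu(h) - \eps_\nu(h)|,
\]
the second (domain-shift) piece is handled immediately by applying \cref{lemma:error-diff} to the fixed classifier $h$ across the measures $\mu$ and $\nu$, contributing $\rho\sqrt{R^2+1}\,W_p(\mu,\nu)$, which absorbs into the $W_p(\mu,\nu)$ term on the right-hand side of~\eqref{eq:algo-stability}.

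For the remaining (ERM) piece, I would first apply the $\rho$-Lipschitzness of $\ell$ in its first argument (\cref{assum:Lipschitz-loss}) pointwise to get $|\eps_\mu(\hat h) - \eps_\mu(h)| \leq \rho\,\E_\mu[|\hat h(x) - h(x)|]$, reducing the task to controlling how closely $\hat h$ matches the pseudo-labeler $h$ in expectation under $\mu$. To exploit that $\hat h$ is the ERM on pseudo-labels, I would introduce the pseudo-label population and empirical risks $\tilde\eps_\mu(f;h) := \E_{x\sim\mu(X)}[\ell(f(x),h(x))]$ and $\hat{\tilde\eps}_S(f;h) := \tfrac{1}{n}\sum_{x\in S}\ell(f(x),h(x))$, and establish a uniform-convergence bound over $\cH$: combining \cref{assum:bounded-complexity}, Talagrand's contraction (legitimized by the $\rho$-Lipschitz loss), and McDiarmid's inequality will give
\[
\sup_{f\in\cH} \bigl|\tilde\eps_\mu(f;h) - \hat{\tilde\eps}_S(f;h)\bigr| \leq \cO\!\left(\frac{\rho B + \sqrt{\log(1/\delta)}}{\sqrt n}\right)
\]
with probability at least $1-\delta$. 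Since $\hat h$ is the empirical minimizer and $h \in \cH$ itself attains $\hat{\tilde\eps}_S(h;h)=0$ under the standard convention $\ell(y,y)=0$, chaining the ERM inequality with the uniform bound yields $\tilde\eps_\mu(\hat h;h) = \cO\bigl((\rho B + \sqrt{\log(1/\delta)})/\sqrt n\bigr)$.

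The main obstacle I foresee is the final conversion: moving from the population pseudo-label risk $\tilde\eps_\mu(\hat h;h) = \E_\mu[\ell(\hat h(x), h(x))]$ back to the quantity $\E_\mu[|\hat h(x)-h(x)|]$ demanded by the Lipschitz reduction. This step calls for a mild lower-bound property of the loss, e.g., $\ell(a,b) \gtrsim |a-b|$, which is satisfied by the ramp/hinge-type surrogates customary for binary classification. A cleaner alternative that sidesteps this conversion is to avoid peeling off the Lipschitz factor in the first place: bound $f \mapsto |\eps_\mu(f) - \eps_\mu(h)|$ directly by uniform convergence on the composed class $\{x \mapsto \ell(f(x),h(x)) : f\in\cH\}$, using Lipschitzness of $\ell$ in its second argument once to swap pseudo-labels for true labels inside an $\E_\mu[|y - h(x)|]$ remainder that can itself be compared across $\mu$ and $\nu$. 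Either way, adding the two contributions yields the stability estimate~\eqref{eq:algo-stability}.
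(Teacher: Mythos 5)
Your high-level decomposition (a domain-shift piece controlled by \cref{lemma:error-diff} plus an estimation piece controlled by Rademacher complexity, Talagrand's contraction, and a concentration inequality) matches the structure of the paper's proof, and the domain-shift half is fine. The gap is in how you close the estimation piece $|\eps_\mu(\hat h)-\eps_\mu(h)|$. Your plan reduces it to $\rho\,\E_\mu[|\hat h(x)-h(x)|]$ and then tries to control that via the population pseudo-label risk $\E_\mu[\ell(\hat h(x),h(x))]$. But \cref{assum:Lipschitz-loss} only gives $\ell(a,b)\leq \rho|a-b|$ -- the wrong direction -- so passing from small $\E_\mu[\ell(\hat h(x),h(x))]$ to small $\E_\mu[|\hat h(x)-h(x)|]$ genuinely requires the extra lower bound $\ell(a,b)\gtrsim|a-b|$, which is not among the paper's assumptions (you correctly flag this). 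Your ``cleaner alternative'' does not actually sidestep it: after swapping pseudo-labels for true labels you are left with a remainder $\rho\,\E_\mu[|y-h(x)|]$, and comparing \emph{that} to $\eps_\nu(h)=\E_\nu[\ell(h(x),y)]$ runs into exactly the same one-directional Lipschitz problem, since $\ell(h(x),y)\leq\rho|h(x)-y|$ bounds the loss \emph{by} the distance, not the distance by the loss.

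The paper avoids any such conversion by working at the empirical level with the \emph{true} labels. Since $h\in\cH$ attains zero empirical pseudo-label loss on $S$, the ERM solution interpolates the pseudo-labels, i.e., $\hat h(x)=h(x)$ for every $x\in S$; hence the empirical true-label losses $\frac{1}{n}\sum_{x\in S}\ell(\hat h(x),y)$ and $\frac{1}{n}\sum_{x\in S}\ell(h(x),y)$ are \emph{identical}, with no slack to quantify. Two applications of a standard uniform-convergence bound (Lemma A.1 of \citet{kumar2020understanding}, with Talagrand's lemma and \cref{assum:bounded-complexity} supplying the $\rho B/\sqrt n$ rate) lift this equality to $\eps_\mu(\hat h)\leq\eps_\mu(h)+\cO\bigl((\rho B+\sqrt{\log(1/\delta)})/\sqrt n\bigr)$, and \cref{lemma:error-diff} then converts $\eps_\mu(h)$ to $\eps_\nu(h)$. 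To repair your argument you should replace the ``population pseudo-label risk'' detour with this exact-interpolation step (or explicitly add the loss lower-bound assumption, which would make your result strictly less general than the paper's).
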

\paragraph{Comparison with \citet{kumar2020understanding}}
The setting of \citet{kumar2020understanding} is more restrictive than ours. For example, its analysis is specific to ramp loss \citep{huang2014ramp}, a rarely used loss function for binary classification. \citet{kumar2020understanding} also studies the error difference over consecutive domains, and prove a multiplicative bound (in Theorem 3.2 of \citet{kumar2020understanding}), which can be re-expressed in terms of our notations and assumptions as
\begin{align}\label{eq:kumar-multiplicative}
    \eps_{\mu}(\hat h) \leq \frac{2}{1\mathrm{-}R \Delta_\infty}\eps_{\nu}(h) \mathrm{+} \eps_\mu^* \mathrm{+} \cO\biggl(\frac{\rho B \mathrm{+} \sqrt {\log \frac 1 \delta}}{\sqrt n}\biggr)
\end{align}
where $\eps_\mu^* \triangleq \min_{f\in \cH}\eps_\mu(f)$ is the optimal error of $\cH$ in $\mu$, and $\Delta_\infty\triangleq \max_{y\in\{-1,1\}}(W_\infty(\mu(X|Y=y),\nu(X|Y=y)))$ can be seen as an analog to the $W_p(\mu,\nu)$ in \eqref{eq:algo-stability}. \citet{kumar2020understanding} assumes $1-R\Delta_\infty>0$, thus the error $\eps_\nu(h)$ is increased by the factor $\frac{2}{1-R\Delta_\infty} > 1$ in the above error bound of $\eps_\mu(\hat h)$. This leads to a target domain error bound \textit{exponential} in $T$ (Corollary 3.3. of \citet{kumar2020understanding}) when one applies \eqref{eq:kumar-multiplicative} to the sequence of domains iteratively in gradual self-training (i.e., Eq. \eqref{eq:gradual-ST}). In contrast, our \eqref{eq:algo-stability} indicates $\eps_{\mu}(\hat h ) \leq \eps_{\nu}(h) + \mathrm{other~terms}$, which increases the error $\eps_\nu(h)$ in an \textit{additive} way, leading to a target domain error bound \textit{linear} in $T$.

\paragraph{Remarks on Generality}
\cref{lemma:error-diff} and \cref{prop:algorithm-stability} are not restricted to gradual domain adaptation. Of independent interest, they can be leveraged as useful theoretical tools to handle distribution shifts in other machine learning problems, including unsupervised domain adaptation, transfer learning, OOD robustness, and group fairness.

\subsection{An Online Learning View of GDA}\label{sec:online-view}

One can naively apply \cref{prop:algorithm-stability} to gradual self-training over the sequence of domains (i.e., Eq. \eqref{eq:gradual-ST}) iteratively and obtain an error bound of the target domain as
\begin{align}\label{eq:naive-gen-bound}
    \eps_{T}(h_{T}) \leq \eps_{0}(h_0) + \cO\biggl( T \Delta \mathrm{+} T\frac{\rho B\mathrm{+}\sqrt{\log\frac 1 \delta }}{\sqrt n}~\biggr)
\end{align}
Obviously, the larger $T$, the higher the error bound becomes (this holds even if one assumes $T\Delta\leq \mathrm{constant}$ for fixed source and target domains). However, this contradicts with empirical observations that a moderately large $T$ is optimal \citep{kumar2020understanding,abnar2021gradual,chen2021gradual}.

To resolve this discrepancy, we take an online learning view of gradual domain adaptation, and expect to obtain a more optimistic error bound. Specifically, we consider the domains $t=0,\dots,T$ coming to the model in a sequential way. As the model observes data of domain $t$, it updates itself using ERM over pseudo-labels of the newly observed data, and then it will enter the next iteration of the domain $t+1$. Specifically, the first iteration is the source domain, and the model updates itself using ERM over the labeled data, where the labels can be seen as pseudo-labels generated by a ground-truth labeling function.

To proceed, certain structural assumptions and complexity measures are necessary. For example, VC dimension \citep{vapnik1999nature} and Rademacher complexity \citep{bartlett2002rademacher} are proposed for supervised learning. Similarly, in online learning, Littlestone dimension \citep{littlestone1988learning}, sequential covering number \citep{rakhlin2010online} and sequential Rademacher complexity \citep{rakhlin2010online,rakhlin2015online} are developed as useful complexity measures. To study gradual self-training in an online learning framework, we adopt the framework of \citet{rakhlin2015online}, which views online binary classification as a process in the structure of a \textit{complete binary tree} and defines the \textit{sequential Rademacher complexity} upon that.

\begin{definition}[Complete Binary Trees]\label{def:complete-binary-tree}
We define two complete binary trees $\mathscr X, \mathscr Y$, and the path $\bm \sigma$ in the trees:
\begin{itemize}[leftmargin=0em,align=left,noitemsep,nolistsep]
    \item[] $\mathscr X\triangleq(\mathscr X_0, ..., \mathscr X_{T})$, a sequence of mappings with $\mathscr{X}_t : \{\pm 1\}^{t} \rightarrow \cX$ for $t = 0,...,T$.
    \item[] $\mathscr Y\triangleq(\mathscr Y_0, ..., \mathscr Y_{T})$, a sequence mappings with $\mathscr{Y}_t : \{\pm 1\}^{t} \rightarrow \cY$ for $t = 0,...,T$.
    \item[] $\bm \sigma = (\sigma_0, ..., \sigma_{T}) \in \{\pm 1\}^{t} $, a path in $\mathscr{X}$ or $\mathscr{Y}$.
\end{itemize}

\end{definition}

\begin{definition}[Sequential Rademacher Complexity]\label{def:seq-rademacher}~
Consider $\bm \sigma$ as a sequence of Rademacher random variables and a $t$-dimensional probability vector $\mathbf{q}_t= (q_0,...,q_{t-1})$, then the sequential Rademacher complexity of $\cH$ is
\begin{align*} 
    \mathcal{R}^{\mathrm{seq}}_{t}(\mathcal H) &= \sup_{\bm{\mathscr X,\mathscr Y}} \E_{\bm\sigma}\left[ \sup_{h \in \mathcal H}\sum_{\tau=0}^{t-1} \sigma_\tau q_\tau \ell\bigl( h( { \mathscr X}_\tau (\bm{\sigma})), \mathscr Y_\tau(\bm{\sigma}) \bigr)\right] 
\end{align*}
\end{definition}
To better understand this measure, we present examples of two common model classes, which are provided in \citet{rakhlin2014notes}.

\begin{example}[Linear Models] \label{example:linear-model}
For the linear model class that is $R$-Liphschtiz, i.e., $\cH = \{ x \rightarrow w^\top x: \|w\|_2 \leq R\}$, we have $\mathcal{R}^{\mathrm{seq}}_t(\mathcal H) \leq \frac{R}{\sqrt{t}}$ for $t\in \mathbb{Z}_+$.

\end{example}

\begin{example}[Neural Networks]\label{example:neural-net}
Consider $\mathcal H$ as the hypothesis class of $R$-Lipschitz $L$-layer fully-connected neural nets with $1$-Lipschitz activation function (e.g., ReLU, Sigmoid, TanH). Then, its sequential Rademacher complexity is bounded as $\mathcal{R}^{\mathrm{seq}}_t(\mathcal H) \leq \cO\left(R \sqrt{\frac{\left(\log t \right)^{3(L-1)}}{t}}\right)$ for $t\in \mathbb{Z}_+$.
\end{example}
Besides the model complexity measure, we also adopt a measure of discrepancy among multiple data distributions, which is proposed in works of online learning for time-series data \citep{kuznetsov2014generalization,kuznetsov2015learning,kuznetsov2016time,kuznetsov2017generalization,kuznetsov2020discrepancy}.
\begin{definition}[Discrepancy Measure]\label{def:disc}
For any $t$-dimensional probability vector $\bq_{t} = (q_0,...,q_{t-1})$, the discrepancy measure $\disc(\bq_{t})$ is defined as 
\begin{align}\label{eq:disc-def}
    \disc(\bq_{t}) &= \sup_{h\in \cH} \left( \eps_{t-1}(h) - \sum_{\tau=0}^{t-1} q_\tau \cdot \eps_{\tau}(h) \right) 
\end{align}
\end{definition}

We can further bound this discrepancy in our setting (defined in Sec. \ref{sec:prelim}) as follows. The proof is in \cref{supp:proof:discrepancy-bound}.
\begin{lemma}[Discrepancy Bound]\label{lemma:disc-bound}
With \cref{lemma:error-diff}, the discrepancy measure \eqref{eq:disc-def} can be upper bounded as
\begin{align}\label{eq:desc-upper-bound-1}
    \disc(\bq_{t}) &\leq \rho \sqrt{R^2 + 1} \sum_{\tau=0}^{t-1}q_\tau  (t-\tau - 1)\Delta
\end{align}
With $\bq_{t} = \bq_{t}^* = (\frac{1}{t},..., \frac{1}{t})$, this upper bound can be minimized as
\begin{align}
    \disc(\bq_{t}^*) \leq  \rho \sqrt{R^2 + 1} ~t \Delta / 2= \cO(t\Delta)
\end{align}
\end{lemma}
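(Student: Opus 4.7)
\textbf{Proof plan for Lemma~\ref{lemma:disc-bound}.}
The plan is to unfold the discrepancy into a weighted sum of pairwise error gaps, bound each gap by a $p$-Wasserstein distance via \cref{lemma:error-diff}, and then telescope through the chain of intermediate domains using the triangle inequality for $W_p$.

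First, since $\bq_{t}$ is a probability vector, $\sum_{\tau=0}^{t-1} q_\tau = 1$, so for every $h \in \cH$ we may rewrite
\begin{align*}
\eps_{t-1}(h) - \sum_{\tau=0}^{t-1} q_\tau \eps_{\tau}(h)
 = \sum_{\tau=0}^{t-1} q_\tau \bigl(\eps_{t-1}(h) - \eps_{\tau}(h)\bigr).
\end{align*}
Taking the supremum over $h$ and using $q_\tau \geq 0$, the sup distributes inside the weighted sum (up to an inequality), which already reduces the problem to controlling $|\eps_{t-1}(h) - \eps_{\tau}(h)|$ uniformly in $h$.

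Second, for each $\tau \in \{0,\dots,t-1\}$, I would apply \cref{lemma:error-diff} to the pair $(\mu_{t-1},\mu_{\tau})$, giving $|\eps_{t-1}(h) - \eps_{\tau}(h)| \leq \rho\sqrt{R^2+1}\, W_p(\mu_{t-1}, \mu_{\tau})$. Since $W_p$ is a genuine metric on probability measures (satisfying the triangle inequality), I can chain it along the consecutive domains:
\begin{align*}
W_p(\mu_{t-1}, \mu_\tau) \leq \sum_{s=\tau+1}^{t-1} W_p(\mu_{s-1}, \mu_s) = \sum_{s=\tau+1}^{t-1}\Delta_s.
\end{align*}
This sum contains exactly $t-\tau-1$ consecutive-shift terms, and bounding each by the average $\Delta$ (which is where the path-uniformity interpretation of $\Delta$ enters) yields $W_p(\mu_{t-1},\mu_\tau) \leq (t-\tau-1)\Delta$. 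Putting these two steps together establishes \eqref{eq:desc-upper-bound-1}.

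Third, for the tightening with $\bq_t^* = (1/t,\dots,1/t)$, I would plug in and perform the arithmetic sum
\begin{align*}
\sum_{\tau=0}^{t-1} \frac{1}{t}(t-\tau-1) = \frac{1}{t}\sum_{k=0}^{t-1} k = \frac{t-1}{2} \leq \frac{t}{2},
\end{align*}
which immediately gives $\disc(\bq_t^*) \leq \rho\sqrt{R^2+1}\,t\Delta/2 = \cO(t\Delta)$. The reason uniform weights are optimal in this upper bound is that the coefficient $(t-\tau-1)$ is a decreasing, nonnegative function of $\tau$, so shifting mass toward larger $\tau$ (later domains, closer to $\mu_{t-1}$) reduces the bound; the uniform choice is the simplest closed form and suffices for the later online-learning analysis. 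The main potential obstacle is the step that replaces $\sum_{s=\tau+1}^{t-1}\Delta_s$ with $(t-\tau-1)\Delta$: strictly, this uses the average $\Delta$ as a per-step surrogate, which is clean under the (implicit) interpretation that $\Delta$ upper bounds each local shift, e.g., when intermediate domains are chosen along a geodesic so that all $\Delta_t$ are comparable — exactly the regime motivated in the introduction as the optimal path.
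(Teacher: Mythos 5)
Your proof follows essentially the same route as the paper's: rewrite the discrepancy as a $q$-weighted sum of error gaps using $\sum_\tau q_\tau = 1$, bound each gap via \cref{lemma:error-diff} together with the $W_p$ triangle inequality chained along consecutive domains, and evaluate the arithmetic sum for the uniform weights. You are in fact more explicit than the paper about the telescoping step and about the caveat that replacing $\sum_{s=\tau+1}^{t-1}\Delta_s$ by $(t-\tau-1)\Delta$ treats the global average $\Delta$ as a per-step surrogate --- a looseness the paper's one-line ``By \cref{lemma:error-diff}'' step shares silently.
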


\subsection{Generalization Bound for Gradual Self-Training}
With our results obtained in \cref{sec:error-diff} and tools introduced in \cref{sec:online-view}, we can prove a generalization bound for gradual self-training within online learning frameworks such as \citet{kuznetsov2016time,kuznetsov2020discrepancy}. However, if we use these frameworks in an off-the-shelf way, the resulting generalization bound will have multiple terms with dependence on $T$ and no dependence on $n$ (the number of samples per domain), since these online learning works do not care about the data size of each domain. This will cause the resulting bound to be loose in terms of $n$. To resolve this, we come up with a novel reductive view of the learning process of gradual self-training, which is more fine-grained than the original view in \citet{kumar2020understanding}. This reductive view enables us to make the generalization bound to depend on $n$ in an intuitive way, which also tightens the final bound. We defer explanations of this view to \cref{supp:proof:gen-bound} along with the proof of \cref{thm:gen-bound}. 

Finally, we prove a generalization bound for gradual self-training that is much tighter than that of \citet{kumar2020understanding}.
\begin{theorem}[Generalization Bound for Gradual Self-Training]\label{thm:gen-bound} 
For any $\delta\in(0,1)$, the population loss of gradually self-trained classifier $h_{T}$ in the target domain is upper bounded with probability at least $1-\delta$ as
\begin{align}
 \eps_{T} (h_{T}) &\leq \sum_{t=0}^T q_t \eps_{t}(h_t) + \|\bq_{\scriptscriptstyle n(T+1)}\|_2 \left(1\mathrm{+}\cO\left(\sqrt{\log   (1/ \delta)}\right)\right)\nonumber\\
 &\qquad \mathrm{+}\disc(\bq_{\scriptscriptstyle T+1})\mathrm{+}\cO\left(\sqrt{\log T } \cR_{n(T+1)}^{\mathrm{seq}}(\ell \circ \cH)\right) \nonumber
 \end{align}
 For the class of neural nets considered in Example \ref{example:neural-net}, 
 \begin{align}
 \eps_{T}(h_T) &\leq  \eps_{0}(h_0) +\cO\biggl(T\Delta \mathrm{+}\frac{T}{\sqrt n} \mathrm{+} T\sqrt{\frac{\log 1 / \delta}{n}}\nonumber\\
 &\mathrm{+}\frac{1}{\sqrt{nT}}  \mathrm{+}\sqrt{\frac{(\log nT)^{3L-2}}{nT}} \mathrm{+} \sqrt{\frac{\log 1/\delta}{nT}}~\biggr)\label{eq:gen-bound}
\end{align}
\end{theorem}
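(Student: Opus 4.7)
The plan is to combine the online-learning generalization machinery of Kuznetsov--Mohri type \citep{kuznetsov2016time,kuznetsov2020discrepancy} with the single-step stability of \cref{prop:algorithm-stability} and the discrepancy estimate of \cref{lemma:disc-bound}, and then specialize to the neural-network hypothesis class.

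First, I would derive the general (first) bound. View the full gradual self-training procedure as an online process of length $n(T{+}1)$ by unpacking each domain $t$ into its $n$ unlabeled samples, so that the learner effectively sees one observation per round while the pseudo-labeler $h_t$ is held fixed across the $n$ rounds of the $t$-th block. Applying the Kuznetsov--Mohri non-i.i.d.\ generalization theorem to this sequence decomposes the target error $\eps_T(h_T)$ into four pieces: a weighted pathwise error $\sum_{t=0}^{T} q_t\, \eps_t(h_t)$ obtained after summing within blocks; a discrepancy term $\disc(\bq_{T+1})$ comparing the weighted mixture of observed distributions to $\mu_T$; a sequential Rademacher complexity of order $\cO(\sqrt{\log T}\, \cR^{\mathrm{seq}}_{n(T+1)}(\ell \circ \cH))$; and a McDiarmid-type deviation of order $\|\bq_{n(T+1)}\|_2(1+\sqrt{\log(1/\delta)})$. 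This is exactly the first inequality in \cref{thm:gen-bound}.

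Second, I would specialize. Pick $\bq^{*}_{T+1} = (\tfrac{1}{T+1},\ldots,\tfrac{1}{T+1})$ so that $\|\bq^{*}_{n(T+1)}\|_2 = 1/\sqrt{n(T+1)}$ and, by \cref{lemma:disc-bound}, $\disc(\bq^{*}_{T+1}) = \cO(T\Delta)$. To control the pathwise term, unroll \cref{prop:algorithm-stability} along $h_0 \to h_1 \to \cdots \to h_T$: a single step contributes $\cO(\Delta_t + (\rho B + \sqrt{\log(T/\delta)})/\sqrt{n})$ after a union bound over the $T$ ERM steps, and telescoping gives $\eps_t(h_t) \leq \eps_0(h_0) + \cO(\sum_{\tau \leq t}\Delta_\tau + t/\sqrt{n} + t\sqrt{\log(1/\delta)/n})$. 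Averaging with $\bq^{*}$ and using $\sum_t q_t^{*} t = \Theta(T)$ yields $\sum_t q_t^{*}\, \eps_t(h_t) \leq \eps_0(h_0) + \cO(T\Delta + T/\sqrt{n} + T\sqrt{\log(1/\delta)/n})$. Finally, substituting the neural-net sequential Rademacher bound of \cref{example:neural-net}, $\cR^{\mathrm{seq}}_{n(T+1)}(\ell \circ \cH) \leq \cO\bigl(\sqrt{(\log nT)^{3(L-1)}/(nT)}\bigr)$, and collecting terms reproduces \eqref{eq:gen-bound}.

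The main obstacle is justifying the $n(T{+}1)$-horizon formulation rather than a $(T{+}1)$-horizon one. A naive application of Kuznetsov--Mohri-style bounds treats each domain as a single round, which loses the crucial $\sqrt{nT}$ factor in both the deviation and complexity terms and produces a bound with no dependence on $n$. The technical content is to refine the reduction so that each unlabeled sample is its own sequential-Rademacher coordinate, while the pseudo-labeling function is piecewise constant over blocks of size $n$; this requires verifying that the within-block conditioning is compatible with the sequential symmetrization argument and that the block-level discrepancy reduces to $\disc(\bq_{T+1})$ as in \cref{def:disc}. Once this refined reduction is in place, everything else is bookkeeping using the already-proven \cref{lemma:error-diff}, \cref{prop:algorithm-stability}, \cref{lemma:disc-bound}, and \cref{example:neural-net}.
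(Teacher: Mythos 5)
Your proposal follows essentially the same route as the paper: the sample-level ("reductive") unpacking of each domain into $n$ rounds to obtain an $n(T{+}1)$-horizon Kuznetsov--Mohri bound, specialization to uniform weights, \cref{lemma:disc-bound} for the discrepancy term, telescoping of \cref{prop:algorithm-stability} (with \cref{lemma:error-diff}) for the pathwise term, and \cref{example:neural-net} with the Lipschitz composition lemma for the sequential Rademacher term. The only cosmetic difference is that the paper telescopes the pathwise term as $\frac{1}{T+1}\sum_t \eps_{\mu_t}(h_T)$ (the final classifier on every domain) rather than your $\sum_t q_t\,\eps_t(h_t)$, but both reduce to the same two lemmas and yield the same $\eps_0(h_0)+\cO(T\Delta+T/\sqrt{n}+T\sqrt{\log(1/\delta)/n})$ contribution.
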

\textbf{Remark}~
The bound in Eq.~\eqref{eq:gen-bound} is rather intuitive: the first term $\eps_{0}(h_0)$ is the source error of the initial classifier, and $T\Delta$ corresponds to the total length of the path of intermediate domains connecting the source domain and the target domain. The asymptotic $\cO(T/\sqrt{n})$ term is due to the accumulated estimation error of the pseudo-labeling algorithm incurred at each step. The $\cO(1/\sqrt{nT})$ term characterizes the overall sample size used by the algorithm along the path, i.e., the algorithm has seen $n$ samples in each domain, and there are $T$ total domains that gradual self-training runs on.

\textbf{Comparison with \citet{kumar2020understanding}}~ Using our notation, the generalization bound of \citet{kumar2020understanding} can be re-expressed as
\begin{align}\label{eq:kumar-gen-bound}
    \eps_{T}(h_T) \leq e^{\cO(T)} \biggl(\eps_{0}(h_0) \mathrm{+} \cO\bigl(\frac{1}{\sqrt n } \mathrm{+} \sqrt{\frac{\log T}{n}}~\bigr)\biggr),
\end{align}
which grows \textit{exponentially} in $T$ as a multiplicative factor. In contrast, our bound \eqref{eq:gen-bound} grows only additively and linearly in $T$, achieving an \textit{exponential improvement} compared with the bound of \citet{kumar2020understanding} shown in \eqref{eq:kumar-gen-bound}.

\begin{figure}[t!]
\begin{center}
\centerline{\includegraphics[width=.9\columnwidth]{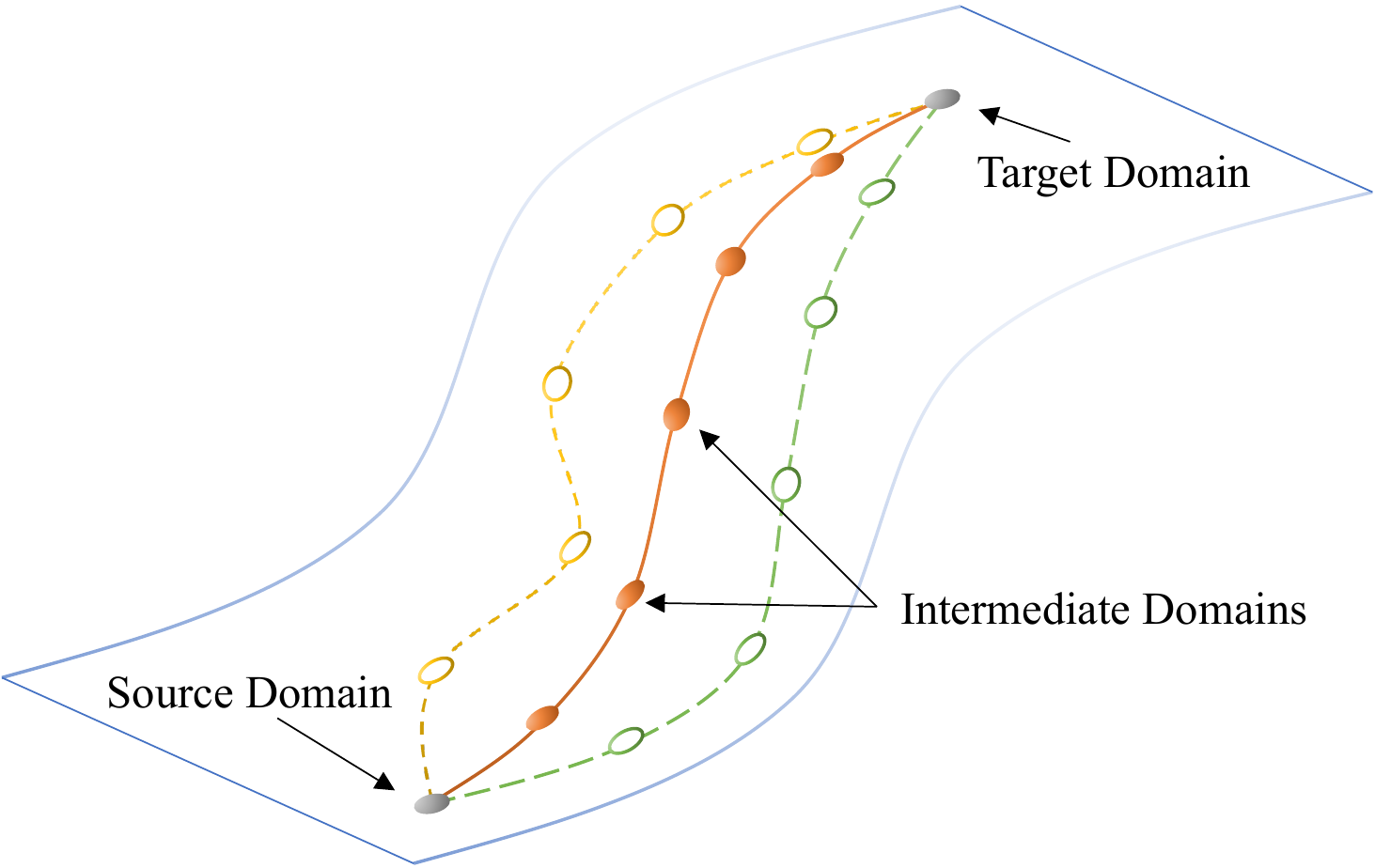}}
\vskip -0.1in
\caption{
An illustration of the optimal path in gradual domain adaptation, with a detailed explanation in Sec.~\ref{sec:optimal-path}. The orange path is the geodesic connecting the source domain and target domain.  
}
\label{fig:manifold-demo}
\end{center}
\vskip -0.4in
\end{figure}

\subsection{Optimal Path of Gradual Self-Training}\label{sec:optimal-path}
It is worth pointing out that our generalization bound in Theorem~\ref{thm:gen-bound} applies to any path connecting the source domain and target domain with $T$ steps, as long as $\mu_0$ is the source domain and $\mu_T$ is the target domain. In particular, if we define $\deltam$ to be an upper bound\footnote{Need to be large enough to ensure that $\mathcal P$ is non-empty.} on the average $W_p$ distance between any pair of consecutive domains along the path, i.e., $\Delta_{\max} \geq \frac{1}{T}\sum_{t=1}^T W_p(\mu_{t-1}, \mu_t)$, and let $\mathcal{P}$ to be the collection of paths with $T$ steps connecting $\mu_0$ and $\mu_T$:
\begin{equation*}
\mathcal{P}\defeq \{(\mu_{t})_{t=0}^T\mid \frac 1 T \sum_{t=1}^T W_p(\mu_{t-1}, \mu_t)\leq \deltam\}~,
\end{equation*}
then we can extend the generalization bound in Theorem~\ref{thm:gen-bound}:
\begin{equation}
\eps_{T}(h_T) \leq  \eps_{0}(h_0) \mathrm{+} \inf_{\mathcal{P}} \widetilde{\cO}\biggl(T\deltam \mathrm{+}\frac{T}{\sqrt n} \mathrm{+} \sqrt{\frac{1}{nT}}~\biggr)
\label{eq:simplified}
\end{equation}
Minimizing the RHS of the above upper bound w.r.t.\ $T$ (the proof is provided in \cref{supp:proof:optimal-T}), we obtain the optimal choice of $T$ on the order of
\begin{align}\label{eq:optimal-T-expression}
    \widetilde{\cO}\left(\left(\frac{1}{1 + \deltam \sqrt{n}}\right)^{2/3}\right).
\end{align}
However, the above asymptotic optimal length may not be achievable, since we need to ensure that $T\deltam$ is at least the length of the geodesic connecting the source domain and target domain. To this end, define $L$ to be the $W_p$ distance between the source domain and target domain, we thus have the optimal choice $T^*$ as
\begin{equation}
    T^* = \max\left\{\frac{L}{\deltam}, \widetilde{\cO}\left(\left(\frac{1}{1 + \deltam \sqrt{n}}\right)^{2/3}\right)\right\}.
\label{equ:optt}
\end{equation}
Intuitively, the inverse scaling of $T^*$ and $\deltam$ suggests that, if the average distance between consecutive domains is large, it is better to take fewer intermediate domains. 

\textbf{Illustration of the Optimal Path}~
To further illustrate the notion of the optimal path connecting the source domain and target domain implied by our theory, we provide an example in Fig.~\ref{fig:manifold-demo}. Consider the metric space induced by $W_p$ over all the joint distributions with finite $p$-th moment, where both the source and target could be understood as two distinct points. In this case, there are infinitely many paths of step size $T$ connecting the source and target, such that the average pairwise distance is bounded by $\deltam$. Hence, one insight we can draw from Eq.~\eqref{eq:simplified} is that: if the learner could construct the intermediate domains, then it is better to choose the path that is as close to the geodesic, i.e., the shortest path between the source and target (under $W_p$), as possible. This key observation opens a broad avenue forward toward algorithmic designs of gradual domain adaptation to \textit{construct} intermediate domains for better generalization performance in the target domain. However, the design and discussion of algorithms along this direction are beyond the scope of this paper, and we leave it to future works.

\section{Experiments}
To empirically validate our theoretical findings, we examine gradual self-training on two synthetic and two real datasets. 

\textbf{Implementation}~
We adopt the official code of gradual self-training by \citet{kumar2020understanding} that is implemented in Keras \citep{keras} and TensorFlow \citep{tensorflow}. The Color-Shift MNIST and CoverType datasets are not covered by the official code of \citet{kumar2020understanding}, thus we include them by ourselves. We adopt the data splitting of CoverType from \citet{kumar2020understanding}, and our data splitting of Rotated MNIST and Portraits are mostly the same as \citet{kumar2020understanding}.
Following \citet{kumar2020understanding}, we use the architecture of 3-layer ReLU MLP with BatchNorm \citep{batchnorm} and Dropout(0.5) \citep{dropout}, and apply it to all datasets. We use the cross-entropy loss and the Adam optimizer \citep{adam} following the practices of \citet{kumar2020understanding}.

\begin{figure*}[t!]
\begin{center}
\includegraphics[width=.92\columnwidth]{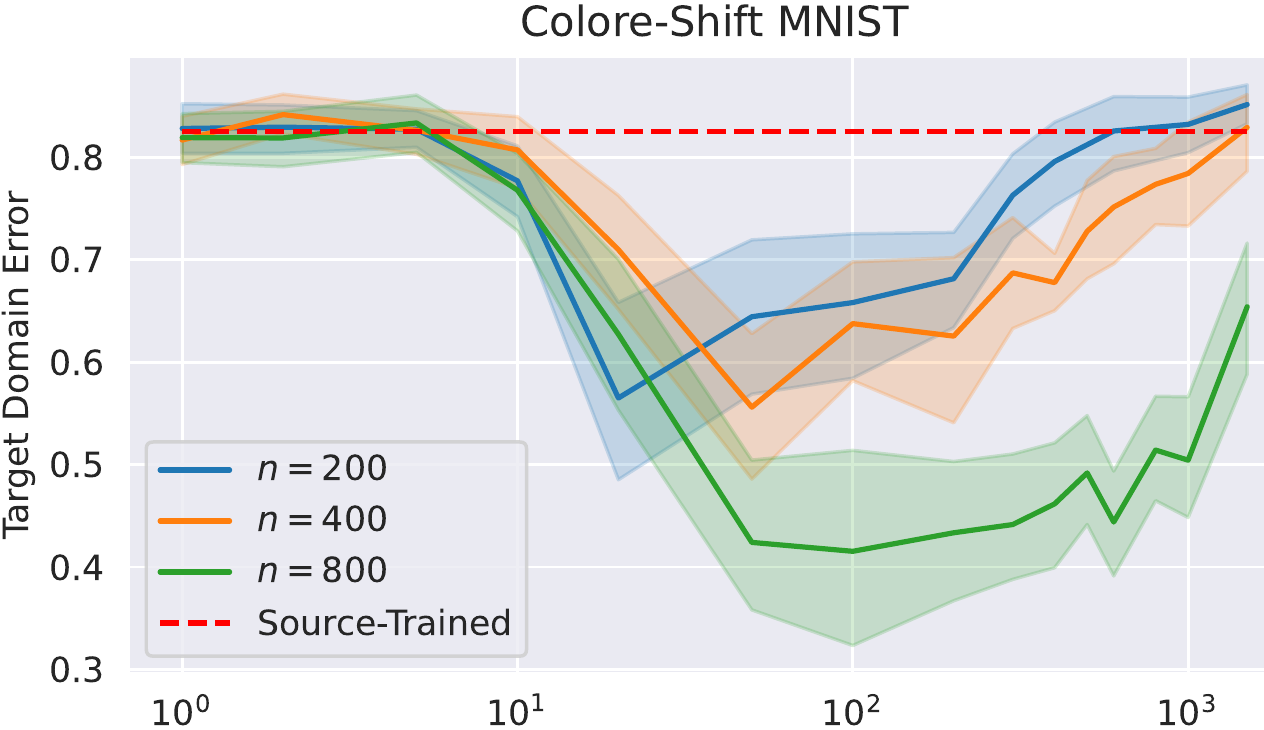}
\includegraphics[width=.95\columnwidth]{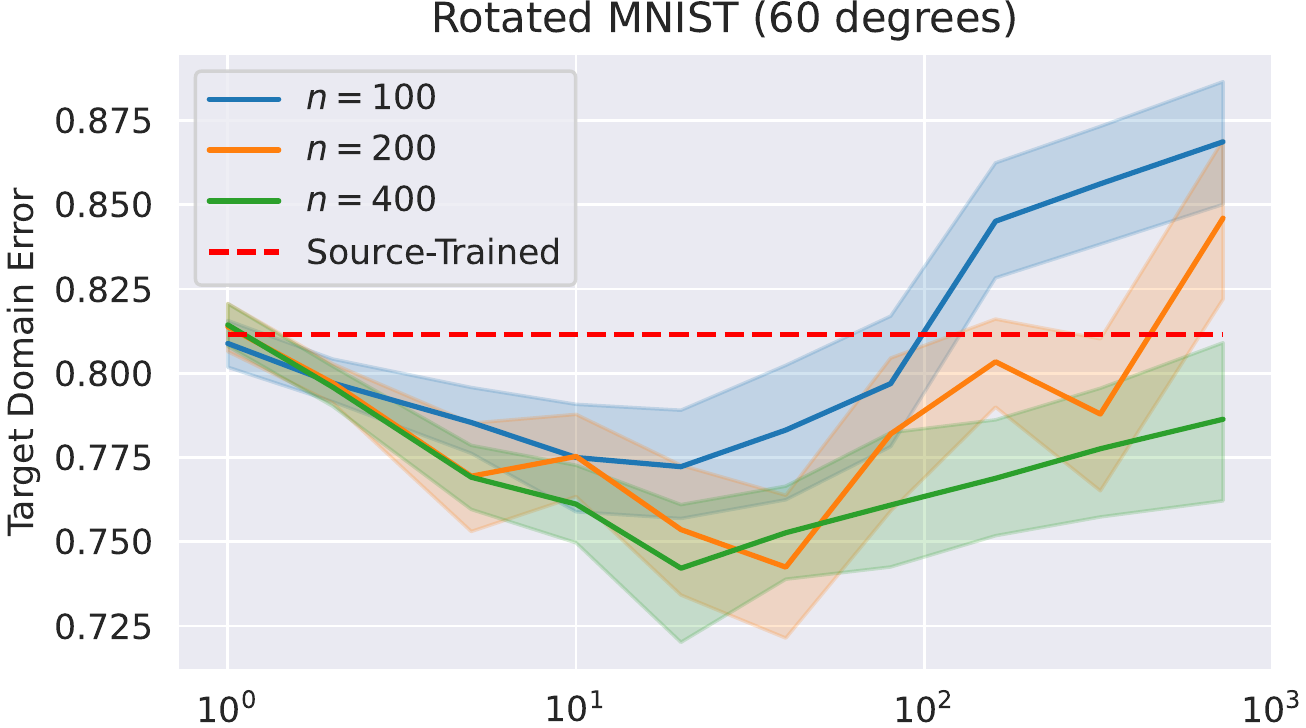}
\includegraphics[width=.95\columnwidth]{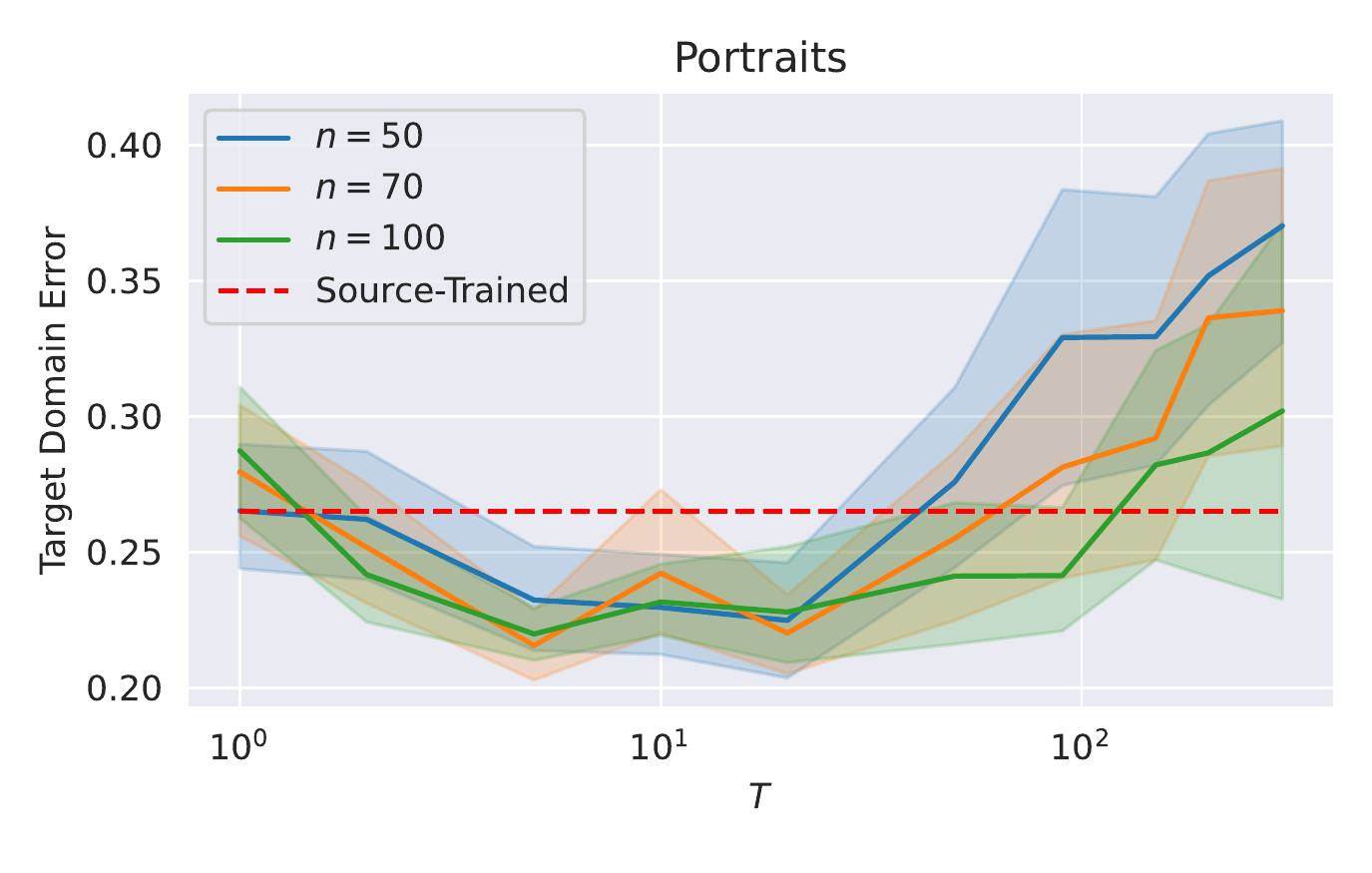}
\includegraphics[width=.95\columnwidth]{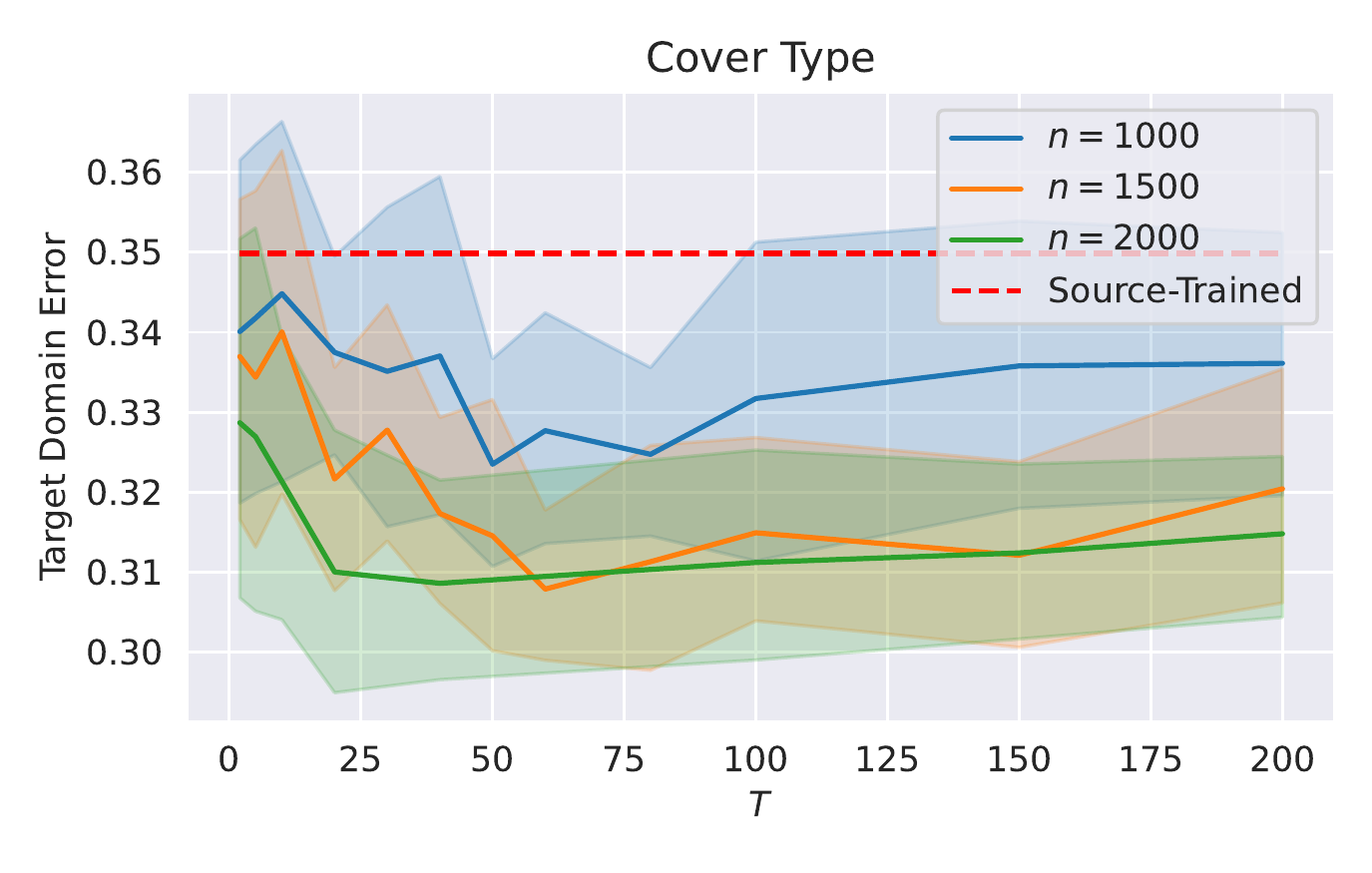}
\vspace{-1em}
\caption{
Empirical results of gradual self-training with increasing $T$ on four datasets. The red dashed line is for models that are purely trained on the source domain and evaluated in the target domain. In each plot, results of three choices of $n$ (number of samples per intermediate domain) are provided. The shading of each curve indicates the standard deviation of measured error over 20 runs.
}
\label{fig:main-exp}
\end{center}
\vskip -.2in
\end{figure*}

\textbf{Datasets}~
We use two semi-synthetic datasets built upon MNIST \citep{mnist}: Color-Shift MNIST and Rotated MNIST. Color-Shift MNIST is our custom dataset that shifts the pixel color values of grayscale MNIST images, and Rotated MNIST is a common dataset for gradual domain adaptation studies \citep{kumar2020understanding,abnar2021gradual,chen2021gradual}. We also consider two real datasets, CoverType and Portraits, which are used by prior GDA works \citep{kumar2020understanding,chen2021gradual}.

\textit{Color-Shift MNIST}: We normalize the pixel value of each MNIST image from [0,255] to [0,1]. The source domain is of the original MNIST data distribution, and the target domain contains images with pixels shifted by +1, i.e., the pixel value range is shifted from [0,1] to [1,2]. The 50K training set images of MNIST are split into a source domain of 5K images (no shift) and intermediate domains of the rest data (shifted by a value between 0 and +1 uniformly). 10K MNIST test images are all shifted by +1 to the target.

\textit{Rotated MNIST}: A semi-synthetic dataset rotating MNIST images by an angle between 0 and 60 degrees. The 50K training set images of MNIST are divided into a source domain of 5K images (no rotation), intermediate domains of 42K images (0-60 degrees), and a set of validation data of the rest images. The 10K MNIST test set images are all rotated by 60 degrees to become data of the target domain.

\textit{CoverType} \citep{CoverType}:
A tabular dataset hosted by the UCI repository \citep{UCI_repository} that aims to predict the forest cover type given 54 features. Following \citet{kumar2020understanding}, we sort examples by their distances to the water body in ascending order, and split the data into a source domain (first 50K examples), intermediate domains (following 400K examples), and a target domain (the last 50K examples).

\textit{Portraits} \citep{ginosar2015century}:
An image dataset of grayscale photos of high school seniors from 1905 to 2013 (Fig. \ref{fig:portraits}). We split the dataset into a source domain of 1905-1930 (1K images), intermediate domains of 1931-2009 (34K images), and a target domain of 2010-2013 (1K examples).

\textbf{Empirical Results}~
We run gradual self-training with various choices of $n$ and $T$ over these four datasets (i.e., there are $T-1$ intermediate domains and $1$ target domain, each with $n$ unlabelled data for adaptation). Each curve on the figure is the average accuracy measured over 20 repeated experiments. Empirical results shown in Fig. \ref{fig:main-exp} are consistent with the theoretical prediction of our generalization bound that we discuss in Sec. \ref{sec:optimal-path}: as the source and target are fixed, along a chosen path of intermediate domains (e.g., counter-clockwise rotation in Rotated MNIST from 0 to 60 degrees), the target domain test error decreases then increases, indicating the existence of an optimal choice of $T$ for each $n$ of consideration.

\textbf{Remarks}~
Our empirical results suggest that in practices of gradual domain adaptation, the hyper-parameters $T$ and $n$ are crucial and should be carefully treated. Also, if one can collect or generate intermediate domain data \citep{abnar2021gradual}, the choices of intermediate domains should be examined in advance, and our theoretical findings in this paper could serve as a guide.

\section{Conclusion and Discussion}
For unsupervised gradual domain adaptation, we provide a significantly improved analysis for the generalization error of the gradual self-training algorithm, under a more general setting with relaxed assumptions. In particular, compared with existing results, our bound provides an \emph{exponential} improvement on the dependency of the step size $T$, as well as a better sample complexity of $O(1/\sqrt{nT})$, as opposed to $O(1/\sqrt{n})$ as in the existing work. Perhaps more interestingly, our generalization bound contains one term that admits a natural and intuitive interpretation: the length of the path produced by the intermediate domains that connect the source domain and target domain. Hence, our theory indicates that when constructing intermediate domains, an algorithm should aim to find those on the geodesic connecting the source domain and target domain. We believe this insight can open a broad avenue toward future algorithm designs for gradual domain adaptation when no intermediate domains are available. It also remains an open question on how to efficiently construct the optimal intermediate domains when only unlabeled data are available from the target domain.

\section*{Acknowledgements}
This work is partially supported by NSF grant No.1910100, NSF CNS No.2046726, C3 AI, and the Alfred P. Sloan Foundation. BL and HZ would like to thank the support from a Facebook research award.

\bibliography{reference.bib}

\begin{thebibliography}{73}
\providecommand{\natexlab}[1]{#1}
\providecommand{\url}[1]{\texttt{#1}}
\expandafter\ifx\csname urlstyle\endcsname\relax
  \providecommand{\doi}[1]{doi: #1}\else
  \providecommand{\doi}{doi: \begingroup \urlstyle{rm}\Url}\fi

\bibitem[Abadi et~al.(2016)Abadi, Barham, Chen, Chen, Davis, Dean, Devin,
  Ghemawat, Irving, Isard, Kudlur, Levenberg, Monga, Moore, Murray, Steiner,
  Tucker, Vasudevan, Warden, Wicke, Yu, and Zheng]{tensorflow}
Abadi, M., Barham, P., Chen, J., Chen, Z., Davis, A., Dean, J., Devin, M.,
  Ghemawat, S., Irving, G., Isard, M., Kudlur, M., Levenberg, J., Monga, R.,
  Moore, S., Murray, D.~G., Steiner, B., Tucker, P., Vasudevan, V., Warden, P.,
  Wicke, M., Yu, Y., and Zheng, X.
\newblock {TensorFlow}: A system for {Large-Scale} machine learning.
\newblock In \emph{12th USENIX Symposium on Operating Systems Design and
  Implementation (OSDI 16)}, pp.\  265--283, Savannah, GA, November 2016.
  USENIX Association.
\newblock ISBN 978-1-931971-33-1.
\newblock URL
  \url{https://www.usenix.org/conference/osdi16/technical-sessions/presentation/abadi}.

\bibitem[Abnar et~al.(2021)Abnar, Berg, Ghiasi, Dehghani, Kalchbrenner, and
  Sedghi]{abnar2021gradual}
Abnar, S., Berg, R. v.~d., Ghiasi, G., Dehghani, M., Kalchbrenner, N., and
  Sedghi, H.
\newblock Gradual domain adaptation in the wild: When intermediate
  distributions are absent.
\newblock \emph{arXiv preprint arXiv:2106.06080}, 2021.

\bibitem[Ajakan et~al.(2014)Ajakan, Germain, Larochelle, Laviolette, and
  Marchand]{ajakan2014domain}
Ajakan, H., Germain, P., Larochelle, H., Laviolette, F., and Marchand, M.
\newblock Domain-adversarial neural networks.
\newblock \emph{arXiv preprint arXiv:1412.4446}, 2014.

\bibitem[Anderson(1972)]{anderson1972more}
Anderson, P.~W.
\newblock More is different.
\newblock \emph{Science}, 177\penalty0 (4047):\penalty0 393--396, 1972.

\bibitem[Arazo et~al.(2020)Arazo, Ortego, Albert, O’Connor, and
  McGuinness]{arazo2020pseudo}
Arazo, E., Ortego, D., Albert, P., O’Connor, N.~E., and McGuinness, K.
\newblock Pseudo-labeling and confirmation bias in deep semi-supervised
  learning.
\newblock In \emph{IJCNN}, pp.\  1--8. IEEE, 2020.

\bibitem[Arjovsky et~al.(2017)Arjovsky, Chintala, and Bottou]{WGAN}
Arjovsky, M., Chintala, S., and Bottou, L.
\newblock {W}asserstein generative adversarial networks.
\newblock In Precup, D. and Teh, Y.~W. (eds.), \emph{ICML}, volume~70 of
  \emph{Proceedings of Machine Learning Research}, pp.\  214--223. PMLR, 06--11
  Aug 2017.

\bibitem[Arora et~al.(2019)Arora, Du, Hu, Li, and Wang]{fine-grained}
Arora, S., Du, S., Hu, W., Li, Z., and Wang, R.
\newblock Fine-grained analysis of optimization and generalization for
  overparameterized two-layer neural networks.
\newblock In \emph{International Conference on Machine Learning}, pp.\
  322--332, 2019.

\bibitem[Bartlett \& Mendelson(2002)Bartlett and
  Mendelson]{bartlett2002rademacher}
Bartlett, P.~L. and Mendelson, S.
\newblock Rademacher and gaussian complexities: Risk bounds and structural
  results.
\newblock \emph{Journal of Machine Learning Research}, 3\penalty0
  (Nov):\penalty0 463--482, 2002.

\bibitem[Ben-David et~al.(2007)Ben-David, Blitzer, Crammer, Pereira,
  et~al.]{ben2007analysis}
Ben-David, S., Blitzer, J., Crammer, K., Pereira, F., et~al.
\newblock Analysis of representations for domain adaptation.
\newblock \emph{Advances in neural information processing systems},
  19:\penalty0 137, 2007.

\bibitem[Ben-David et~al.(2010)Ben-David, Blitzer, Crammer, Kulesza, Pereira,
  and Vaughan]{ben-david2010theory}
Ben-David, S., Blitzer, J., Crammer, K., Kulesza, A., Pereira, F., and Vaughan,
  J.~W.
\newblock A theory of learning from different domains.
\newblock \emph{Machine learning}, 79\penalty0 (1):\penalty0 151--175, 2010.

\bibitem[Blackard \& Dean(1999)Blackard and Dean]{CoverType}
Blackard, J.~A. and Dean, D.~J.
\newblock Comparative accuracies of artificial neural networks and discriminant
  analysis in predicting forest cover types from cartographic variables.
\newblock \emph{Computers and electronics in agriculture}, 24\penalty0
  (3):\penalty0 131--151, 1999.

\bibitem[Bobu et~al.(2018)Bobu, Tzeng, Hoffman, and Darrell]{bobu2018adapting}
Bobu, A., Tzeng, E., Hoffman, J., and Darrell, T.
\newblock Adapting to continuously shifting domains, 2018.
\newblock URL \url{https://openreview.net/forum?id=BJsBjPJvf}.

\bibitem[Cao \& Gu(2019)Cao and Gu]{cao2019generalization}
Cao, Y. and Gu, Q.
\newblock Generalization bounds of stochastic gradient descent for wide and
  deep neural networks.
\newblock In \emph{Advances in Neural Information Processing Systems}, pp.\
  10835--10845, 2019.

\bibitem[Chen \& Chao(2021)Chen and Chao]{chen2021gradual}
Chen, H.-Y. and Chao, W.-L.
\newblock Gradual domain adaptation without indexed intermediate domains.
\newblock \emph{Advances in Neural Information Processing Systems}, 34, 2021.

\bibitem[Chollet et~al.(2015)]{keras}
Chollet, F. et~al.
\newblock Keras.
\newblock \url{https://keras.io}, 2015.

\bibitem[Courty et~al.(2014)Courty, Flamary, and Tuia]{courty2014domain}
Courty, N., Flamary, R., and Tuia, D.
\newblock Domain adaptation with regularized optimal transport.
\newblock In \emph{Joint European Conference on Machine Learning and Knowledge
  Discovery in Databases}, pp.\  274--289. Springer, 2014.

\bibitem[Courty et~al.(2016)Courty, Flamary, Tuia, and
  Rakotomamonjy]{courty2016optimal}
Courty, N., Flamary, R., Tuia, D., and Rakotomamonjy, A.
\newblock Optimal transport for domain adaptation.
\newblock \emph{IEEE transactions on pattern analysis and machine
  intelligence}, 39\penalty0 (9):\penalty0 1853--1865, 2016.

\bibitem[Courty et~al.(2017)Courty, Flamary, Habrard, and
  Rakotomamonjy]{courty2017joint}
Courty, N., Flamary, R., Habrard, A., and Rakotomamonjy, A.
\newblock Joint distribution optimal transportation for domain adaptation.
\newblock In Guyon, I., Luxburg, U.~V., Bengio, S., Wallach, H., Fergus, R.,
  Vishwanathan, S., and Garnett, R. (eds.), \emph{Advances in Neural
  Information Processing Systems}, volume~30. Curran Associates, Inc., 2017.

\bibitem[Dong et~al.(2022)Dong, Zhou, Wang, and Zhao]{dong2022algorithms}
Dong, J., Zhou, S., Wang, B., and Zhao, H.
\newblock Algorithms and theory for supervised gradual domain adaptation.
\newblock \emph{arXiv preprint arXiv:2204.11644}, 2022.

\bibitem[Dua \& Graff(2017)Dua and Graff]{UCI_repository}
Dua, D. and Graff, C.
\newblock {UCI} machine learning repository, 2017.
\newblock URL \url{http://archive.ics.uci.edu/ml}.

\bibitem[Farshchian et~al.(2019)Farshchian, Gallego, Cohen, Bengio, Miller, and
  Solla]{farshchian2018adversarial}
Farshchian, A., Gallego, J.~A., Cohen, J.~P., Bengio, Y., Miller, L.~E., and
  Solla, S.~A.
\newblock Adversarial domain adaptation for stable brain-machine interfaces.
\newblock In \emph{International Conference on Learning Representations}, 2019.
\newblock URL \url{https://openreview.net/forum?id=Hyx6Bi0qYm}.

\bibitem[Gadermayr et~al.(2018)Gadermayr, Eschweiler, Klinkhammer, Boor, and
  Merhof]{gadermayr2018gradual}
Gadermayr, M., Eschweiler, D., Klinkhammer, B.~M., Boor, P., and Merhof, D.
\newblock Gradual domain adaptation for segmenting whole slide images showing
  pathological variability.
\newblock In \emph{International Conference on Image and Signal Processing},
  pp.\  461--469. Springer, 2018.

\bibitem[Ganin et~al.(2016)Ganin, Ustinova, Ajakan, Germain, Larochelle,
  Laviolette, Marchand, and Lempitsky]{ganin2016domain}
Ganin, Y., Ustinova, E., Ajakan, H., Germain, P., Larochelle, H., Laviolette,
  F., Marchand, M., and Lempitsky, V.
\newblock Domain-adversarial training of neural networks.
\newblock \emph{The journal of machine learning research}, 17\penalty0
  (1):\penalty0 2096--2030, 2016.

\bibitem[Ginosar et~al.(2015)Ginosar, Rakelly, Sachs, Yin, and
  Efros]{ginosar2015century}
Ginosar, S., Rakelly, K., Sachs, S., Yin, B., and Efros, A.~A.
\newblock A century of portraits: A visual historical record of american high
  school yearbooks.
\newblock In \emph{Proceedings of the IEEE International Conference on Computer
  Vision Workshops}, pp.\  1--7, 2015.

\bibitem[Gulrajani \& Lopez-Paz(2021)Gulrajani and Lopez-Paz]{gulrajani2021in}
Gulrajani, I. and Lopez-Paz, D.
\newblock In search of lost domain generalization.
\newblock In \emph{International Conference on Learning Representations}, 2021.

\bibitem[Hendrycks et~al.(2021)Hendrycks, Zhao, Basart, Steinhardt, and
  Song]{hendrycks2021natural}
Hendrycks, D., Zhao, K., Basart, S., Steinhardt, J., and Song, D.
\newblock Natural adversarial examples.
\newblock In \emph{CVPR}, pp.\  15262--15271, 2021.

\bibitem[Hoffman et~al.(2014)Hoffman, Darrell, and
  Saenko]{hoffman2014continuous}
Hoffman, J., Darrell, T., and Saenko, K.
\newblock Continuous manifold based adaptation for evolving visual domains.
\newblock In \emph{Proceedings of the IEEE Conference on Computer Vision and
  Pattern Recognition}, pp.\  867--874, 2014.

\bibitem[Huang et~al.(2014)Huang, Shi, and Suykens]{huang2014ramp}
Huang, X., Shi, L., and Suykens, J.~A.
\newblock Ramp loss linear programming support vector machine.
\newblock \emph{The Journal of Machine Learning Research}, 15\penalty0
  (1):\penalty0 2185--2211, 2014.

\bibitem[Ioffe \& Szegedy(2015)Ioffe and Szegedy]{batchnorm}
Ioffe, S. and Szegedy, C.
\newblock Batch normalization: Accelerating deep network training by reducing
  internal covariate shift.
\newblock In \emph{International Conference on Machine Learning}, pp.\
  448--456, 2015.

\bibitem[Kantorovich(1939)]{kantorovich1939}
Kantorovich, L.~V.
\newblock Mathematical methods of organizing and planning production.
\newblock \emph{Management science}, 6\penalty0 (4):\penalty0 366--422, 1939.

\bibitem[Kingma \& Ba(2015)Kingma and Ba]{adam}
Kingma, D.~P. and Ba, J.
\newblock Adam: A method for stochastic optimization.
\newblock In \emph{ICLR}, 2015.

\bibitem[Koh et~al.(2021)Koh, Sagawa, Marklund, Xie, Zhang, Balsubramani, Hu,
  Yasunaga, Phillips, Gao, et~al.]{koh2021wilds}
Koh, P.~W., Sagawa, S., Marklund, H., Xie, S.~M., Zhang, M., Balsubramani, A.,
  Hu, W., Yasunaga, M., Phillips, R.~L., Gao, I., et~al.
\newblock Wilds: A benchmark of in-the-wild distribution shifts.
\newblock In \emph{International Conference on Machine Learning}, pp.\
  5637--5664. PMLR, 2021.

\bibitem[Kumar et~al.(2020)Kumar, Ma, and Liang]{kumar2020understanding}
Kumar, A., Ma, T., and Liang, P.
\newblock Understanding self-training for gradual domain adaptation.
\newblock In \emph{International Conference on Machine Learning}, pp.\
  5468--5479. PMLR, 2020.

\bibitem[Kuznetsov \& Mohri(2014)Kuznetsov and
  Mohri]{kuznetsov2014generalization}
Kuznetsov, V. and Mohri, M.
\newblock Generalization bounds for time series prediction with non-stationary
  processes.
\newblock In \emph{International conference on algorithmic learning theory},
  pp.\  260--274. Springer, 2014.

\bibitem[Kuznetsov \& Mohri(2015)Kuznetsov and Mohri]{kuznetsov2015learning}
Kuznetsov, V. and Mohri, M.
\newblock Learning theory and algorithms for forecasting non-stationary time
  series.
\newblock In \emph{NIPS}, pp.\  541--549. Citeseer, 2015.

\bibitem[Kuznetsov \& Mohri(2016)Kuznetsov and Mohri]{kuznetsov2016time}
Kuznetsov, V. and Mohri, M.
\newblock Time series prediction and online learning.
\newblock In \emph{Conference on Learning Theory}, pp.\  1190--1213. PMLR,
  2016.

\bibitem[Kuznetsov \& Mohri(2017)Kuznetsov and
  Mohri]{kuznetsov2017generalization}
Kuznetsov, V. and Mohri, M.
\newblock Generalization bounds for non-stationary mixing processes.
\newblock \emph{Machine Learning}, 106\penalty0 (1):\penalty0 93--117, 2017.

\bibitem[Kuznetsov \& Mohri(2020)Kuznetsov and Mohri]{kuznetsov2020discrepancy}
Kuznetsov, V. and Mohri, M.
\newblock Discrepancy-based theory and algorithms for forecasting
  non-stationary time series.
\newblock \emph{Annals of Mathematics and Artificial Intelligence}, 88\penalty0
  (4):\penalty0 367--399, 2020.

\bibitem[LeCun \& Cortes(1998)LeCun and Cortes]{mnist}
LeCun, Y. and Cortes, C.
\newblock {MNIST} handwritten digit database.
\newblock 1998.
\newblock URL \url{http://yann.lecun.com/exdb/mnist/}.

\bibitem[Lee et~al.(2013)]{lee2013pseudo}
Lee, D.-H. et~al.
\newblock Pseudo-label: The simple and efficient semi-supervised learning
  method for deep neural networks.
\newblock In \emph{Workshop on challenges in representation learning, ICML},
  2013.

\bibitem[Liang et~al.(2019)Liang, He, Sun, and Tan]{liang2019distant}
Liang, J., He, R., Sun, Z., and Tan, T.
\newblock Distant supervised centroid shift: A simple and efficient approach to
  visual domain adaptation.
\newblock In \emph{CVPR}, pp.\  2975--2984, 2019.

\bibitem[Liang et~al.(2020)Liang, Hu, and Feng]{liang2020we}
Liang, J., Hu, D., and Feng, J.
\newblock Do we really need to access the source data? source hypothesis
  transfer for unsupervised domain adaptation.
\newblock In \emph{ICML}, pp.\  6028--6039, 2020.

\bibitem[Liang(2016)]{liang2016cs229t}
Liang, P.
\newblock Statistical learning theory, 2016.
\newblock URL \url{https://web.stanford.edu/class/cs229t/notes.pdf}.

\bibitem[Littlestone(1988)]{littlestone1988learning}
Littlestone, N.
\newblock Learning quickly when irrelevant attributes abound: A new
  linear-threshold algorithm.
\newblock \emph{Machine learning}, 2\penalty0 (4):\penalty0 285--318, 1988.

\bibitem[Long et~al.(2015)Long, Cao, Wang, and Jordan]{long2015learning}
Long, M., Cao, Y., Wang, J., and Jordan, M.
\newblock Learning transferable features with deep adaptation networks.
\newblock In \emph{International conference on machine learning}, pp.\
  97--105. PMLR, 2015.

\bibitem[Malinin et~al.(2021)Malinin, Band, Gal, Gales, Ganshin, Chesnokov,
  Noskov, Ploskonosov, Prokhorenkova, Provilkov, Raina, Raina, Roginskiy,
  Shmatova, Tigas, and Yangel]{malinin2021shifts}
Malinin, A., Band, N., Gal, Y., Gales, M., Ganshin, A., Chesnokov, G., Noskov,
  A., Ploskonosov, A., Prokhorenkova, L., Provilkov, I., Raina, V., Raina, V.,
  Roginskiy, D., Shmatova, M., Tigas, P., and Yangel, B.
\newblock Shifts: A dataset of real distributional shift across multiple
  large-scale tasks.
\newblock In \emph{Thirty-fifth Conference on Neural Information Processing
  Systems Datasets and Benchmarks Track (Round 2)}, 2021.
\newblock URL \url{https://openreview.net/forum?id=qM45LHaWM6E}.

\bibitem[Peyr{\'e} et~al.(2019)Peyr{\'e}, Cuturi,
  et~al.]{peyre2019computational}
Peyr{\'e}, G., Cuturi, M., et~al.
\newblock Computational optimal transport: With applications to data science.
\newblock \emph{Foundations and Trends{\textregistered} in Machine Learning},
  11\penalty0 (5-6):\penalty0 355--607, 2019.

\bibitem[Pham et~al.(2021)Pham, Dai, Xie, and Le]{pham2021meta}
Pham, H., Dai, Z., Xie, Q., and Le, Q.~V.
\newblock Meta pseudo labels.
\newblock In \emph{CVPR}, pp.\  11557--11568, 2021.

\bibitem[Rakhlin \& Sridharan(2014)Rakhlin and Sridharan]{rakhlin2014notes}
Rakhlin, A. and Sridharan, K.
\newblock Statistical learning and sequential prediction, 2014.

\bibitem[Rakhlin et~al.(2010)Rakhlin, Sridharan, and Tewari]{rakhlin2010online}
Rakhlin, A., Sridharan, K., and Tewari, A.
\newblock Online learning: Random averages, combinatorial parameters, and
  learnability.
\newblock In Lafferty, J., Williams, C., Shawe-Taylor, J., Zemel, R., and
  Culotta, A. (eds.), \emph{Advances in Neural Information Processing Systems},
  volume~23. Curran Associates, Inc., 2010.

\bibitem[Rakhlin et~al.(2015)Rakhlin, Sridharan, and Tewari]{rakhlin2015online}
Rakhlin, A., Sridharan, K., and Tewari, A.
\newblock Online learning via sequential complexities.
\newblock \emph{J. Mach. Learn. Res.}, 16\penalty0 (1):\penalty0 155--186,
  2015.

\bibitem[Redko et~al.(2019)Redko, Courty, Flamary, and Tuia]{redko2019optimal}
Redko, I., Courty, N., Flamary, R., and Tuia, D.
\newblock Optimal transport for multi-source domain adaptation under target
  shift.
\newblock In \emph{The 22nd International Conference on Artificial Intelligence
  and Statistics}, pp.\  849--858. PMLR, 2019.

\bibitem[Sagawa et~al.(2021)Sagawa, Koh, Lee, Gao, Xie, Shen, Kumar, Hu,
  Yasunaga, Marklund, Beery, David, Stavness, Guo, Leskovec, Saenko, Hashimoto,
  Levine, Finn, and Liang]{sagawa2021extending}
Sagawa, S., Koh, P.~W., Lee, T., Gao, I., Xie, S.~M., Shen, K., Kumar, A., Hu,
  W., Yasunaga, M., Marklund, H., Beery, S., David, E., Stavness, I., Guo, W.,
  Leskovec, J., Saenko, K., Hashimoto, T., Levine, S., Finn, C., and Liang, P.
\newblock Extending the {WILDS} benchmark for unsupervised adaptation.
\newblock In \emph{NeurIPS 2021 Workshop on Distribution Shifts: Connecting
  Methods and Applications}, 2021.
\newblock URL \url{https://openreview.net/forum?id=2EhHKKXMbG0}.

\bibitem[Sohn et~al.(2020)Sohn, Berthelot, Carlini, Zhang, Zhang, Raffel,
  Cubuk, Kurakin, and Li]{FixMatch}
Sohn, K., Berthelot, D., Carlini, N., Zhang, Z., Zhang, H., Raffel, C.~A.,
  Cubuk, E.~D., Kurakin, A., and Li, C.-L.
\newblock Fixmatch: Simplifying semi-supervised learning with consistency and
  confidence.
\newblock In Larochelle, H., Ranzato, M., Hadsell, R., Balcan, M.~F., and Lin,
  H. (eds.), \emph{Advances in Neural Information Processing Systems},
  volume~33, pp.\  596--608. Curran Associates, Inc., 2020.

\bibitem[Srivastava et~al.(2014)Srivastava, Hinton, Krizhevsky, Sutskever, and
  Salakhutdinov]{dropout}
Srivastava, N., Hinton, G., Krizhevsky, A., Sutskever, I., and Salakhutdinov,
  R.
\newblock Dropout: a simple way to prevent neural networks from overfitting.
\newblock \emph{The journal of machine learning research}, 15\penalty0
  (1):\penalty0 1929--1958, 2014.

\bibitem[Sun \& Saenko(2016)Sun and Saenko]{deepCORAL}
Sun, B. and Saenko, K.
\newblock Deep coral: Correlation alignment for deep domain adaptation, 2016.

\bibitem[Talagrand(1995)]{talagrand1995concentration}
Talagrand, M.
\newblock Concentration of measure and isoperimetric inequalities in product
  spaces.
\newblock \emph{Publications Math{\'e}matiques de l'Institut des Hautes Etudes
  Scientifiques}, 81\penalty0 (1):\penalty0 73--205, 1995.

\bibitem[Tolstikhin et~al.(2018)Tolstikhin, Bousquet, Gelly, and
  Schoelkopf]{WAE}
Tolstikhin, I., Bousquet, O., Gelly, S., and Schoelkopf, B.
\newblock Wasserstein auto-encoders.
\newblock In \emph{International Conference on Learning Representations}, 2018.
\newblock URL \url{https://openreview.net/forum?id=HkL7n1-0b}.

\bibitem[Tzeng et~al.(2017)Tzeng, Hoffman, Saenko, and
  Darrell]{tzeng2017adversarial}
Tzeng, E., Hoffman, J., Saenko, K., and Darrell, T.
\newblock Adversarial discriminative domain adaptation.
\newblock In \emph{Proceedings of the IEEE conference on computer vision and
  pattern recognition}, pp.\  7167--7176, 2017.

\bibitem[Van~Engelen \& Hoos(2020)Van~Engelen and Hoos]{van2020survey}
Van~Engelen, J.~E. and Hoos, H.~H.
\newblock A survey on semi-supervised learning.
\newblock \emph{Machine Learning}, 109\penalty0 (2):\penalty0 373--440, 2020.

\bibitem[Vapnik(1999)]{vapnik1999nature}
Vapnik, V.
\newblock \emph{The nature of statistical learning theory}.
\newblock Springer science \& business media, 1999.

\bibitem[Villani(2009)]{villani2009optimal}
Villani, C.
\newblock \emph{Optimal transport: old and new}, volume 338.
\newblock Springer, 2009.

\bibitem[Wang et~al.(2020)Wang, He, and Katabi]{wang2020continuously}
Wang, H., He, H., and Katabi, D.
\newblock Continuously indexed domain adaptation.
\newblock In \emph{ICML}, volume 119 of \emph{Proceedings of Machine Learning
  Research}, pp.\  9898--9907. PMLR, 13--18 Jul 2020.

\bibitem[Wulfmeier et~al.(2018)Wulfmeier, Bewley, and
  Posner]{wulfmeier2018incremental}
Wulfmeier, M., Bewley, A., and Posner, I.
\newblock Incremental adversarial domain adaptation for continually changing
  environments.
\newblock In \emph{2018 IEEE International conference on robotics and
  automation (ICRA)}, pp.\  4489--4495. IEEE, 2018.

\bibitem[Xie et~al.(2020)Xie, Luong, Hovy, and Le]{xie2020self}
Xie, Q., Luong, M.-T., Hovy, E., and Le, Q.~V.
\newblock Self-training with noisy student improves imagenet classification.
\newblock In \emph{Proceedings of the IEEE/CVF Conference on Computer Vision
  and Pattern Recognition}, pp.\  10687--10698, 2020.

\bibitem[Yarowsky(1995)]{yarowsky1995unsupervised}
Yarowsky, D.
\newblock Unsupervised word sense disambiguation rivaling supervised methods.
\newblock In \emph{33rd annual meeting of the association for computational
  linguistics}, pp.\  189--196, 1995.

\bibitem[Ye et~al.(2022)Ye, Jiang, Wang, Choudhary, Du, Bhushanam, Mokhtari,
  Kejariwal, and qiang liu]{ye2022future}
Ye, M., Jiang, R., Wang, H., Choudhary, D., Du, X., Bhushanam, B., Mokhtari,
  A., Kejariwal, A., and qiang liu.
\newblock Future gradient descent for adapting the temporal shifting data
  distribution in online recommendation system.
\newblock In \emph{The 38th Conference on Uncertainty in Artificial
  Intelligence}, 2022.

\bibitem[Zhao et~al.(2018)Zhao, Zhang, Wu, Moura, Costeira, and
  Gordon]{zhao2018adversarial}
Zhao, H., Zhang, S., Wu, G., Moura, J.~M., Costeira, J.~P., and Gordon, G.~J.
\newblock Adversarial multiple source domain adaptation.
\newblock \emph{Advances in neural information processing systems},
  31:\penalty0 8559--8570, 2018.

\bibitem[Zhao et~al.(2019)Zhao, Des~Combes, Zhang, and Gordon]{zhao2019domain}
Zhao, H., Des~Combes, R.~T., Zhang, K., and Gordon, G.
\newblock On learning invariant representations for domain adaptation.
\newblock In \emph{International Conference on Machine Learning}, pp.\
  7523--7532. PMLR, 2019.

\bibitem[Zhao et~al.(2021)Zhao, Qiao, Xiao, Glass, and Sun]{zhao2021pyhealth}
Zhao, Y., Qiao, Z., Xiao, C., Glass, L., and Sun, J.
\newblock Pyhealth: A python library for health predictive models.
\newblock \emph{arXiv preprint arXiv:2101.04209}, 2021.

\bibitem[Zhou et~al.(2022)Zhou, Zhao, Zhang, Wang, Chang, Wang, and
  Zhu]{zhou2022online}
Zhou, S., Zhao, H., Zhang, S., Wang, L., Chang, H., Wang, Z., and Zhu, W.
\newblock Online continual adaptation with active self-training.
\newblock In \emph{AISTATS}. PMLR, 2022.

\bibitem[Zou et~al.(2018)Zou, Yu, Kumar, and Wang]{zou2018unsupervised}
Zou, Y., Yu, Z., Kumar, B.~V., and Wang, J.
\newblock Unsupervised domain adaptation for semantic segmentation via
  class-balanced self-training.
\newblock In \emph{ECCV}, pp.\  289--305, 2018.

\bibitem[Zou et~al.(2019)Zou, Yu, Liu, Kumar, and Wang]{zou2019confidence}
Zou, Y., Yu, Z., Liu, X., Kumar, B.~V., and Wang, J.
\newblock Confidence regularized self-training.
\newblock In \emph{ICCV}, October 2019.

\end{thebibliography}
\bibliographystyle{icml2022}

\newpage
\appendix
\onecolumn

\section{Proof}\label{supp:proof}

\subsection{Proof of \cref{lemma:error-diff} (Error Difference over Shifted Domains)}\label{supp:proof:error-diff}
Restatement of \cref{lemma:error-diff}.
\textit{Consider two arbitrary measures $\mu, \nu$ over $\cX \times \cY$. Then, for arbitrary classifier $h$ and loss function $l$ satisfying Assumption \ref{assum:Lipschitz-model}, \ref{assum:Lipschitz-loss}, the population loss of $h$ on $\mu$ and $\nu$ satisfies
\begin{align}\label{eq:supp:thm:additive-bound:main}
    |\eps_{\mu}(h) - \eps_{\nu}(h)| & \leq \rho \sqrt{R^2 + 1} ~W_p(\mu, \nu)
\end{align}
where $W_p$ is the Wasserstein-$p$ distance metric and $p\geq 1$.}

\begin{proof}
The population error difference of $h$ over the two domains (i.e., $\mu$ and $\nu$ is
\begin{align}
    |\eps_{\mu}(h) - \eps_{\nu}(h)| &= \left|\E_{x,y\sim \mu}[\ell(h(x), y)] - \E_{x',y'\sim\nu}[\ell(h(x'),y')]\right| \nonumber \\
    &= \left| \int \ell(h(x),y) d \mu - \int \ell(h(x'),y') d \nu  \right|\label{eq:err-diff}
\end{align}

Let $\gamma $ be an arbitrary coupling of $\mu$ and $\nu$, i.e., it is a joint distribution with marginals as $\mu$ and $\nu$. Then, \eqref{eq:err-diff} can be re-written and bounded as
\begin{align}
    |\eps_{\mu}(h) - \eps_{\nu}(h)| &= \left | \int \ell(h(x),y) - \int \ell(h(x'),y') d \gamma  \right|\\
    \text{(triangle inequality)}&\leq \int  \left |\ell(h(x),y) - \int \ell(h(x'),y') \right|d \gamma \\
    \text{($\ell$ is $\rho$-Lipschitz)} &\leq \int  \rho \left( \|h(x) - h(x')\| + \|y-y'\| \right) d\gamma \\
    \text{($h$ is $R$-Lipschitz)} &\leq \int  \rho  R\|x - x'\| +\rho \|y-y'\|  d\gamma \\
    \text{($R > 0$)} &\leq \int  \rho \sqrt{R^2 + 1} \left(\|x -x'\| + \|y-y'\|\right)  d\gamma 
\end{align}
Since $\gamma$ is an arbitrary coupling, we know that 
\begin{align}
    |\eps_{\mu}(h) - \eps_{\nu}(h)| &\leq \inf_{\gamma}  \int  \rho \sqrt{R^2 + 1} \left(\|x -x'\| + \|y-y'\|\right)  d\gamma \\
    &= \rho \sqrt{R^2 + 1} W_1(\mu, \nu)
\end{align}

Since the Wasserstein distance $W_p$ is monotonically increasing for $p \geq 1$, we have the following bound, 
\begin{align}\label{eq:error-diff-neighbour}
    |\eps_{\mu}(h) - \eps_{\nu}(h)|\leq \rho \sqrt{R^2 + 1} W_1(\mu, \nu) \leq \rho \sqrt{R^2 + 1} W_p(\mu, \nu) \leq \rho \sqrt{R^2 + 1} 
\end{align}
\end{proof}

\subsection{Proof of \cref{prop:algorithm-stability} (Algorithm Stability)}\label{supp:proof:algo-stability}

Restatement of \cref{prop:algorithm-stability}.
\textit{Consider two arbitrary measures $\mu,\nu$, and denote $S$ as a set of $n$ unlabelled samples i.i.d. drawn from $\mu$. Suppose $h\in \cH$ is a pseudo-labeler that provides pseudo-labels for samples in $S$. Define $\hat h \in \cH$ as an ERM solution fitted to the pseudo-labels,
\begin{align}
    \hat h = \argmin_{f \in \mathcal{H}} \sum_{x\in S} \ell(f(x), h(x))
\end{align}
Then, for any $\delta \in (0,1)$, the following bound holds true with probability at least $1-\delta$,
\begin{align}\label{eq:supp:algo-stability}
    \bigl|\eps_{\mu}(\hat h) \mathrm{-}\eps_{\nu} (h) \bigl| \leq \cO\biggl(W_p(\mu,\nu) \mathrm{+} \frac{\rho B\mathrm{+}\sqrt{\log\frac 1 \delta }}{\sqrt n}~\biggr)
\end{align}
}

\begin{proof}
Define $\wh \eps_{\mu}(h) \coloneqq \frac{1}{|S|}\sum_{x\in S} \ell(h(x), y)$ as the empirical loss over the dataset $S$, where $S$ consists of samples i.i.d. drawn from $\mu(X)$ and $y$ is the ground truth label of $x$.

Then, we have the following sequence of inequalities:
\begin{align*}
    (\text{Use Lemma A.1 of \citet{kumar2020understanding}})\quad \eps_{\mu}(h) &\leq \wh \eps_{\mu}( \hat h)+ \cO\left(R_n(\ell \circ \mathcal{H})+\sqrt{\frac{\log(1/\delta)}{n}}\right)\\
    \left(\text{since } h(x)=\hat h(x) ~\forall x \in S\right)~&= \widehat \eps_{\mu}(h)+ \cO\left(R_n(\ell \circ \mathcal{H})+\sqrt{\frac{\log(1/\delta)}{n}}\right)\\
    (\text{Use Lemma A.1 of \citet{kumar2020understanding} again})~&\leq \eps_{\mu} (h) + \cO\left(2 R_n(\ell \circ \mathcal{H})+2 \sqrt{\frac{\log(1/\delta)}{n}}\right)\\
    (\text{By \cref{lemma:error-diff}})&\leq \eps_{\nu} (h) + \rho \sqrt{R^2+1} W_p(\mu,\nu) \\
    &\quad + \cO\left( R_n(\ell \circ \mathcal{H})+\sqrt{\frac{\log(1/\delta)}{n}}\right)\\
    (\text{By Talagrand's lemma with Assumption \ref{assum:Lipschitz-loss},\ref{assum:bounded-complexity}})&\leq \eps_{\nu} (h) + \rho \sqrt{R^2+1} W_p(\mu,\nu) \\
    &\quad + \cO\left( \frac{\rho B}{\sqrt n} +\sqrt{\frac{\log(1/\delta)}{n}}\right)\\
    &\leq \eps_{\nu} (h)  + \cO\left( W_p(\mu,\nu)+ \frac{\rho B}{\sqrt n} +\sqrt{\frac{\log(1/\delta)}{n}}\right)
\end{align*}

For the step using Talagrand's lemma \citep{talagrand1995concentration}, the proof of Lemma A.1 of \citet{kumar2020understanding} also involves an identical step, thus we do not replicate the specific details here. 
\end{proof}

\subsection{Proof of \cref{lemma:disc-bound} (Discrepancy Bound)}\label{supp:proof:discrepancy-bound}
Restatement of \cref{lemma:disc-bound}.
\textit{With \cref{lemma:error-diff}, the discrepancy measure \eqref{eq:disc-def} can be upper bounded as
\begin{align}\label{eq:supp:desc-upper-bound-1}
    \disc(\bq_{t}) &\leq \rho \sqrt{R^2 + 1} \sum_{\tau=0}^{t-1}q_\tau\cdot (t-\tau-1)\Delta
\end{align}
Choosing $\bq_{t} = \bq_{t}^* = (\frac{1}{t},..., \frac{1}{t})$, this upper bound can be minimized as
\begin{align}
    \disc(\bq_{t}^*) \leq  \rho \sqrt{R^2 + 1} ~t \Delta / 2= \cO(t\Delta)
\end{align}
}

\begin{proof}
Within our setup of gradual self-training,
\begin{align*}
    \disc(\bq_{t}) &= \sup_{h\in \cH} \left( \eps_{t-1}(h) - \sum_{\tau=0}^{t-1} q_\tau \cdot \eps_{\tau}(h) \right) \\
    &= \sup_{h\in \cH} \left(  \sum_{\tau=0}^{t-1} q_\tau \left(\eps_{t-1}(h) -  \eps_{\tau}(h) \right)\right)\\
    &\leq \sup_{h\in \cH} \left(  \sum_{\tau=0}^{t-1} q_\tau |\eps_{t-1}(h) -  \eps_{\tau}(h) |\right)\\
    (\text{By \cref{lemma:error-diff}}) &\leq \rho \sqrt{R^2 + 1} \sum_{\tau=0}^{t-1}q_\tau\cdot (t-\tau-1)\Delta
\end{align*}
With $\bq_{t} = \bq_{t}^* = (\frac{1}{t},..., \frac{1}{t})$, this bound becomes
\begin{align*}
    \disc(\bq_t^*) \leq \rho \sqrt{R^2 + 1} \sum_{\tau=0}^{t-1}q_\tau\cdot (t-\tau-1)\Delta = \rho \sqrt{R^2 + 1}~ \frac{t}{2} \Delta = \cO(t\Delta)
\end{align*}
and it is trivial to show that this upper bound is smaller than any other $\bq_t$ with $\bq_t \neq \bq_t^*$.
\end{proof}

\subsection{Proof of \cref{thm:gen-bound} (Generalization Bound for Gradual Self-Training))}\label{supp:proof:gen-bound}

\begin{figure}[t!]
\begin{center}
\centerline{\includegraphics[width=.9\textwidth]{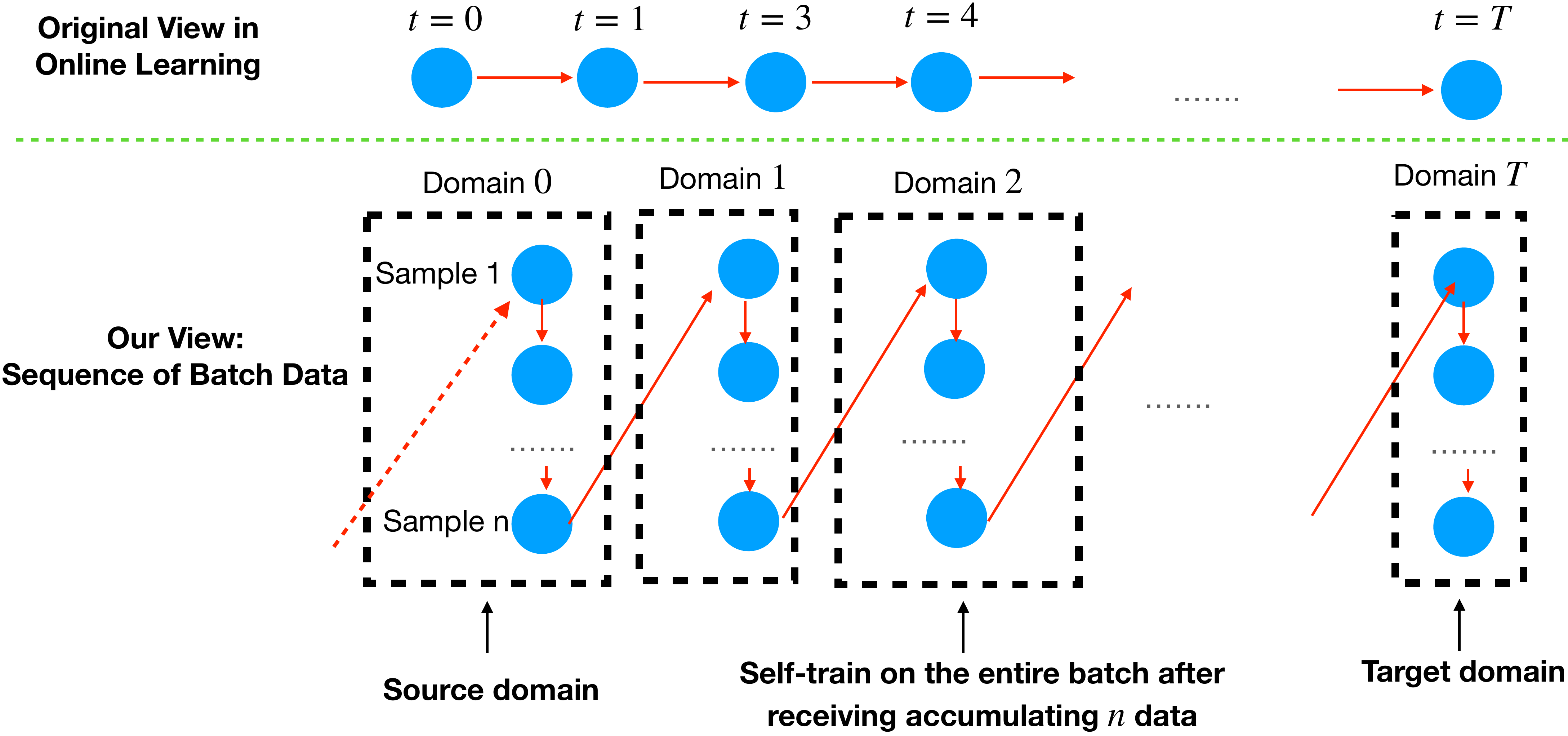}}
\caption{Our reductive view of gradual self-training that is helpful to \cref{thm:gen-bound}.}
\label{fig:our-view}
\end{center}
\vskip -.4in
\end{figure}

Restatement of \cref{thm:gen-bound}.
\textit{For any $\delta\in(0,1)$, the population loss of gradually self-trained classifier $h_{T}$ in the target domain is upper bounded with probability at least $1-\delta$ as
\begin{align}
 \eps_{T} (h_{T}) \leq \sum_{t=0}^T q_t \eps_{t}(h_t) + \|\bq_{\scriptscriptstyle n(T+1)}\|_2 \left(1\mathrm{+}\cO\left(\sqrt{\log   (1/ \delta)}\right)\right)
 \mathrm{+}\disc(\bq_{\scriptscriptstyle T+1})\mathrm{+}\cO\left(\sqrt{\log T } \cR_{n(T+1)}^{\mathrm{seq}}(\ell \circ \cH)\right) \nonumber
 \end{align}
 For the class of neural nets considered in Example \ref{example:neural-net}, 
 \begin{align}
 \eps_{T}(h_T) \leq  \eps_{0}(h_0) \mathrm{+} \cO\biggl(T\Delta \mathrm{+}\frac{T}{\sqrt n} \mathrm{+} T\sqrt{\frac{\log 1 / \delta}{n}}\mathrm{+}\frac{1}{\sqrt{nT}}
 \mathrm{+}\sqrt{\frac{(\log nT)^{3L-2}}{nT}} \mathrm{+} \sqrt{\frac{\log 1/\delta}{nT}}~\biggr)\label{eq:supp:gen-bound}
\end{align}}

\paragraph{A Reductive View of the Learning Process of Gradual Self-Training}

If we directly apply Corollary 2 of \citet{kuznetsov2020discrepancy}, we can obtain a generalization bound as
\begin{align}
        \eps_{\mu_T} (h) &\leq \sum_{t=0}^{T} q_t \eps_{\mu_t} (h) + \disc(\bq_{T+1}) + \|\bq_{T+1}\|_2 + 6M \sqrt{4\pi \log T} \mathcal R_T^{\mathrm{seq}} (\ell \circ \mathcal H) \nonumber\\
        &\quad + M \|\bq_{T+1}\|_2 \sqrt{8 \log \frac{1}{\delta }}\nonumber\\
        &\leq \sum_{t=0}^{T} q_t \eps_{\mu_t} (h) + O(T \Delta) + \cO(\frac {1} {\sqrt{T}}) + 6M \sqrt{4\pi \log T} \mathcal R_T^{\mathrm{seq}} (\ell \circ \mathcal H) +  \cO(\sqrt{\frac{\log \frac{1}{\delta }}{T}})\label{eq:supp:proof:gen-bound:naive-bound}
\end{align}
where $M$ is an upper bound on the loss (\cref{lemma:bounded-loss} proves such a $M$ exists), and the last inequality is obtained by setting $\bq_{T+1} = \bq^*_{T+1} = (\frac{1}{T+1},\dots, \frac{1}{T+1})$.

A typical generalization bound involves terms with dependence on $N$ (the training set size), usually in the form $\cO(\sqrt{\frac 1 N})$, and these terms vanish in the infinite-sample limit (i.e., $N\rightarrow \infty$). These terms also appear in standard generalization bounds of unsupervised domain adaptation \citep{ben2007analysis,zhao2019domain}, where $N$ becomes the number of available unlabelled data in the target domain.

In the case of gradual domain adaptation, the total number of available unlabelled is $Tn$, and we would expect $Tn$ will appear in a form similar to $\cO(\sqrt{\frac{1}{nT}})$, which vanishes in the infinite-sample limit (i.e., $nT\rightarrow \infty$). However, the generalization bound \eqref{thm:gen-bound} has terms $\cO(\sqrt{\frac 1 T})$ and $\cO(\sqrt{\frac{\log \frac 1 \delta}{T}})$, which does not vanish even with infinite data per domain, i.e., $n\rightarrow \infty$ (certainly results in $Tn\rightarrow \infty$).

We attribute this issue to the coarse-grained nature of online learning analyses such as \citet{kuznetsov2016time,kuznetsov2020discrepancy}, which do not take data size per domain into consideration.

To address this issue, we propose a novel reductive view of the entire learning process of gradual self-training, leading to a more fined-grained generalization bound than Eq. \eqref{eq:supp:proof:gen-bound:naive-bound}. 

We draw a diagram to illustrate this reductive view in Fig. \ref{fig:our-view}. Specifically, instead of viewing each domain as the smallest element, we zoom in to the sample-level and view each sample as the smallest element of the learning process. We view the gradual self-training algorithm as follows: it has a fixed data buffer of size $n$, and each newly observed sample is pushed to the buffer; the model updates itself by self-training once the buffer is full; after the update, the buffer is emptied. Notice that this view does not alter the learning process of gradual self-training.

With this reductive view, the learning process of gradual self-training consists of $nT$ smallest elements (i.e., each sample is a smallest element), instead of $T$ elements (i.e., each domain is a smallest element) in the view of online learning works \citep{kuznetsov2016time,kuznetsov2020discrepancy}. As a result, terms of order $\cO\left(\sqrt{\frac 1 T}\right)$ in \eqref{eq:supp:proof:gen-bound:naive-bound} becomes $\cO\left(\sqrt{\frac{1}{nT}}\right)$, and terms of order $\cO\left(\frac{T}{n}\right)$ also vanish as $n\rightarrow \infty$. Notably, the upper bounds on the terms $\sum_{t=0}^{T} q_t \eps_{\mu_t} (h)$ and $\disc(\bq_{T+1}) $ in \eqref{eq:naive-gen-bound} do not become larger with this view, since there is no distribution shift within each domain (e.g., the learning process over the first $n$ samples in Fig. \ref{fig:our-view} does not involve any distribution shift, and the iteration $n-1\mapsto n$ incurs a distribution shifts, since the $(n-1)$-th sample is in the first domain while the $n$-th sample is in the second domain).

With this reductive view, we can finally obtain a tighter generalization bound for gradual self-training without the issues mentioned previously.
\begin{proof}
With the inductive view introduced above, we can improve the naive bound \eqref{eq:supp:proof:gen-bound:naive-bound} to
\begin{align*}
 \eps_{\mu_T} (h_{T}) &\leq  \sum_{t=0}^{T} \sum_{i=0}^{n-1} q_{nt+i} \eps_{\mu_t} (h_{T}) + \disc(\bq_{n(T+1)}) + \|\bq_{n(T+1)}\|_2 + 6M \sqrt{4\pi \log nT} \mathcal R_{nT}^{\mathrm{seq}} (\ell \circ \mathcal H)\eq \label{eq:supp:proof:gen-bound:first-line}\\
 &\qquad + M \|\bq_{n(T+1)}\|_2 \sqrt{8 \log \frac{1}{\delta }}\nonumber\\    
 &\leq  \frac {1} {T+1} \sum_{t=0}^{T} \eps_{\mu_t}(h_{T}) +  \rho \sqrt{R^2 + 1}~ \frac{T+1}{2} \Delta  + \frac{1}{\sqrt{nT}}+ 6M \sqrt{4\pi \log nT}R_{nT}^{\mathrm{seq}} (\ell \circ \mathcal H) \nonumber\\
 &\quad + M \sqrt{\frac{8\log 1 / \delta}{ nT}} \\
 &\leq \eps_{\mu_0}(h_0) +  \cO\left(T\Delta+ T\sqrt{\frac{\log 1 / \delta}{n}}+ \frac{1}{\sqrt{nT}} + \rho R\sqrt{\frac{(\log nT)^7}{nT}} + \sqrt{\frac{\log 1/\delta}{nT}}\right )
 \end{align*}
 where $\bq_{n(T+1)}$ is taken as $\bq_{n(T+1)}= \bq_{n(T+1)}^*=(\frac{1}{n(T+1)}), \dots, \frac{1}{n(T+1)})$. We used the following facts when deriving the inequalities above:
 \begin{itemize}
     \item The first term of \eqref{eq:supp:proof:gen-bound:first-line} has the following bound
     \begin{align}\sum_{t=0}^{T} \sum_{i=0}^{n-1} q_{nt+i} \eps_{\mu_t} (h_{T}) &= \frac{1}{T+1} \sum_{t=0}^{T} \eps_{\mu_t}(h_{T})\nonumber\\
     &\leq \eps_{\mu_0}(h_0) +  \cO(T\Delta) + \cO\left(\frac{1}{\sqrt n}+ T\sqrt{\frac{\log 1 / \delta}{n}}\right)
     \end{align}
     which is obtained by recursively apply \cref{lemma:error-diff} and \cref{prop:algorithm-stability} to each term in the summation. For example, the last term in $\sum_{t=0}^{T} \eps_{\mu_t}(h_{T})$ can bounded by \cref{prop:algorithm-stability} as follows
     \begin{align*}
    (\text{By \cref{prop:algorithm-stability}})\quad  \eps_{T}(h_{T}) &\leq \eps_{\mu_{T-1}} (h_{T-1}) + \cO\left(W_p(\mu_T,\mu_{T-1}) + \frac{1}{\sqrt n}+\sqrt{\frac{\log 1 / \delta}{n}}\right)\\
    (\text{Same as the above step})&\leq \ldots \\
    &\leq \eps_{\mu_{0}} (h_{0}) + \cO(T\Delta+ \cO\left(T\sqrt{\frac{\log 1 / \delta}{n}}\right)\eq\label{eq:supp:proof:gen-bound:1st-term:last-term}
    \end{align*}
    and the second last term can be bounded similarly with the additional help of \cref{lemma:error-diff}
    \begin{align*}
         (\text{By \cref{lemma:error-diff}})\quad  \eps_{T-1}(h_{T}) &\leq \eps_{\mu_{T}} (h_{T}) + \cO(W_p(\mu_{T},\mu_{T-1})) \\
     (\text{Apply Eq. \eqref{eq:supp:proof:gen-bound:1st-term:last-term}})&\leq \eps_{\mu_{0}} (h_{0}) + T \Delta + \cO\left( T\sqrt{\frac{\log 1 / \delta}{n}}\right)
    \end{align*}
    All the rest terms (i.e., $\eps_{T-2}(h_{T}),\dots,\eps_{0}(h_{T})$) can be bounded in the same way.
\item The second term of \eqref{eq:supp:proof:gen-bound:first-line} can be bounded by applying \cref{lemma:disc-bound}.
\item The value of $R_{nT}^{\mathrm{seq}} (\ell \circ \mathcal H)$ can be bounded by combining \cref{lemma:seq-rademachor-composite} and Example \ref{example:neural-net}.
 \end{itemize}
\end{proof}

\subsection{Helper Lemmas}\label{supp:proof:helper-lemmas}

\begin{lemma}[Bounded Loss]\label{lemma:bounded-loss}
For any $x\in \mathcal X, y \in \mathcal Y, h\in \mathcal H$, the loss $\ell(x,y)$ is upper bounded by some constant $M$, i.e., $l(h(x), y)\leq M$.
\end{lemma}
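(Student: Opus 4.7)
The plan is to show the loss is uniformly bounded by chaining together three facts: the input space is bounded, the classifier has bounded range (via Lipschitzness), and the label space is compact, all combined through the Lipschitzness of $\ell$. I will first bound the range of $h$ over $\cX$, then use the Lipschitz property of $\ell$ in both arguments to reduce $\ell(h(x),y)$ to a constant plus Lipschitz-controlled deviations.

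First I would argue that $h(\cX)$ lies in a bounded interval. By \cref{assum:input-bound}, $\cX \subseteq \{x : \|x\|_2 \leq 1\}$, so $\mathrm{diam}(\cX) \leq 2$. By \cref{assum:Lipschitz-model}, every $h \in \cH$ is $R$-Lipschitz, hence for any fixed reference point $x_0 \in \cX$ we get $|h(x) - h(x_0)| \leq R\|x-x_0\|_2 \leq 2R$ for all $x \in \cX$. Under the standard convention (implicit throughout the paper and satisfied by the linear class of \cref{example:linear-model} where $h(0)=0$, as well as by any $\cH$ whose outputs live in the compact label space $\cY$) that $|h(x_0)| \leq c$ uniformly over $\cH$ for some constant $c$, we conclude $|h(x)| \leq c + 2R$ for every $h \in \cH$ and every $x \in \cX$.

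Next, since $\cY \subset \bR$ is compact (as stated in \cref{sec:setup}), it has finite diameter $D := \mathrm{diam}(\cY) < \infty$, and in particular $|y| \leq D_0$ for some constant $D_0$ and all $y \in \cY$. Now pick any reference pair $(a_0, b_0) \in \bR \times \cY$ and apply \cref{assum:Lipschitz-loss} twice — once in each coordinate — via the triangle inequality:
\begin{align*}
\ell(h(x), y) &\leq \ell(a_0, b_0) + |\ell(h(x), y) - \ell(h(x), b_0)| + |\ell(h(x), b_0) - \ell(a_0, b_0)| \\
&\leq \ell(a_0, b_0) + \rho|y - b_0| + \rho|h(x) - a_0|.
\end{align*}
Plugging in the bounds from the previous steps gives $\ell(h(x), y) \leq \ell(a_0, b_0) + \rho(D_0 + |b_0|) + \rho(c + 2R + |a_0|) =: M$, which is a finite constant depending only on $R$, $\rho$, and the diameters of $\cX$ and $\cY$.

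The only real subtlety is the uniform anchor $|h(x_0)| \leq c$ across the hypothesis class; this is not an additional assumption in any substantive sense, since the paper's Rademacher-complexity Assumption \ref{assum:bounded-complexity} together with the $R$-Lipschitz assumption already restricts $\cH$ to families (linear predictors, bounded-norm neural nets) for which such an anchor exists. Everything else is a direct triangle-inequality calculation using the Lipschitz hypotheses already invoked in the proofs of \cref{lemma:error-diff} and \cref{prop:algorithm-stability}, so no new machinery is required.
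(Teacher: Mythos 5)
Your proof is correct and takes essentially the same route as the paper's, whose entire argument is a one-line assertion that combining boundedness of $\mathcal{X}$, compactness of $\mathcal{Y}$, $R$-Lipschitzness of $h$, and $\rho$-Lipschitzness of $\ell$ lets one ``easily find'' such an $M$ — you have simply written out the triangle-inequality chain it leaves implicit. The one place you go beyond the paper is the uniform anchor $|h(x_0)|\leq c$: you are right that this is genuinely needed and not supplied by the stated assumptions (a singleton class of an arbitrarily large constant function is $0$-Lipschitz with zero Rademacher complexity), so flagging it as an implicit restriction on $\mathcal{H}$ is a point of rigor the paper's proof silently skips rather than a detour on your part.
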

\begin{proof}
Notice that i) the input $x$ is bounded in a compact space, specifically, $\|x\|_2 \leq 1$ (ensured by Assumption \ref{assum:input-bound}), ii) $y$ lives in a compact space in $\bR$ (defined in Sec. \ref{sec:setup}), iii) the hypothesis $h\in \mathcal H$ is $R$-Lipschitz, and iv) the loss function $\ell$ is $\rho$-Lipschitz.

Combining these conditions, one can easily find that there exists a constant $M$ such that $l(h(x), y)$ for any $x\in \mathcal X, y\in \mathcal Y, h\in \mathcal H$.
\end{proof}

\begin{lemma}[Lemma 14.8 of \citet{rakhlin2014notes}]\label{lemma:seq-rademachor-composite} For $\rho$-Lipschitz loss function $l$, the sequential Rademacher complexity of the loss class $\ell \circ \mathcal H$ is bounded as
\begin{align}
     \mathcal{R}^{\mathrm{seq}}_T(\ell \circ \mathcal H)  \leq  \cO(\rho \sqrt{(\log T)^3}) \mathcal{R}^{\mathrm{seq}}_T(\mathcal H)
\end{align}
\end{lemma}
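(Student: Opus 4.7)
The statement is a sequential-contraction inequality for the Rademacher complexity, and the key point is that, unlike the classical i.i.d.\ Talagrand contraction (which loses only a factor of $\rho$), the sequential analogue is known to incur extra polylogarithmic overhead. My plan is to route the argument through sequential $\ell_\infty$-covering numbers and the sequential fat-shattering dimension, which is the standard Rakhlin--Sridharan approach and is exactly where the $(\log T)^3$ factor originates.

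\textbf{Step 1: Chaining upper bound on $\mathcal{R}^{\mathrm{seq}}_T(\ell \circ \mathcal H)$.} First I would invoke the sequential Dudley entropy integral, which states that for any function class $\mathcal F$ bounded in $[-M,M]$,
\begin{align*}
\mathcal{R}^{\mathrm{seq}}_T(\mathcal F) \;\lesssim\; \inf_{\alpha>0}\biggl\{\alpha \;+\; \frac{1}{\sqrt T}\int_{\alpha}^{M} \sqrt{\log \mathcal{N}^{\mathrm{seq}}_\infty(\delta, \mathcal F, T)}\; d\delta\biggr\},
\end{align*}
where $\mathcal{N}^{\mathrm{seq}}_\infty(\delta, \mathcal F, T)$ denotes the worst-case covering number of $\mathcal F$ in the sequential $\ell_\infty$ metric on complete binary trees of depth $T$. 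I would apply this bound with $\mathcal F = \ell \circ \mathcal H$.

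\textbf{Step 2: Lipschitz transfer of sequential covers.} Next I would show that an $\epsilon/\rho$-cover of $\mathcal H$ on any pair of $\mathcal X$- and $\mathcal Y$-valued trees $(\mathscr X, \mathscr Y)$ lifts to an $\epsilon$-cover of $\ell\circ\mathcal H$ on the same trees. Indeed, for every path $\bm\sigma$ and every depth $\tau$, the $\rho$-Lipschitz property of $\ell$ gives
\begin{align*}
\bigl|\ell(h(\mathscr X_\tau(\bm\sigma)),\mathscr Y_\tau(\bm\sigma)) - \ell(h'(\mathscr X_\tau(\bm\sigma)),\mathscr Y_\tau(\bm\sigma))\bigr| \;\leq\; \rho\,\bigl|h(\mathscr X_\tau(\bm\sigma)) - h'(\mathscr X_\tau(\bm\sigma))\bigr|,
\end{align*}
so $\mathcal{N}^{\mathrm{seq}}_\infty(\epsilon,\ell\circ\mathcal H,T) \leq \mathcal{N}^{\mathrm{seq}}_\infty(\epsilon/\rho,\mathcal H,T)$. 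Substituting and changing variables $\delta \mapsto \delta/\rho$ in the chaining integral pulls out the factor $\rho$ cleanly.

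\textbf{Step 3: Inverting covering back to Rademacher.} To produce a bound in terms of $\mathcal{R}^{\mathrm{seq}}_T(\mathcal H)$ rather than $\log \mathcal{N}^{\mathrm{seq}}_\infty(\cdot,\mathcal H,T)$, I would invoke the two standard Rakhlin--Sridharan inequalities: (i) sequential entropy is controlled by the sequential fat-shattering dimension, $\log \mathcal{N}^{\mathrm{seq}}_\infty(\delta,\mathcal H,T) \lesssim \mathrm{fat}_\delta(\mathcal H)\,\log\tfrac{2 e T}{\delta}$; and (ii) the sequential fat-shattering dimension is itself bounded by the sequential Rademacher complexity, $\mathrm{fat}_\delta(\mathcal H) \lesssim T\,\mathcal{R}^{\mathrm{seq}}_T(\mathcal H)^2 / \delta^2$. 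Plugging these in and optimizing the truncation parameter $\alpha$ in the chaining integral produces an integrand of order $\sqrt{\log(T/\delta)}/\delta$, whose integral over $[\alpha,M]$ contributes a $(\log T)^{3/2}$ factor once $\alpha$ is chosen optimally, yielding the claimed $\sqrt{(\log T)^3}$ overhead.

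\textbf{Main obstacle.} The conceptual steps above are standard, but the bookkeeping in Step 3 is the delicate part: one has to be careful about constants and the base-level diameter $M$ (which here is finite by \cref{lemma:bounded-loss}), and about the precise exponent of $\log T$ that emerges from combining the entropy-to-fat-shattering and fat-shattering-to-Rademacher inequalities with the Dudley integral. Getting exactly $(\log T)^3$ rather than a larger power requires using the sharp form of both inequalities. Since the statement is cited verbatim as Lemma~14.8 of \citet{rakhlin2014notes}, the cleanest route is simply to cite that reference, with the above sketch serving as the underlying justification.
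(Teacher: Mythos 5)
Your proposal is correct and effectively coincides with the paper's treatment: the paper's entire proof of this lemma is ``See \citet{rakhlin2014notes}'', and you likewise conclude that citing that reference is the right move. Your sketch of the underlying argument (sequential Dudley integral, Lipschitz transfer of sequential $\ell_\infty$-covers giving $\mathcal{N}^{\mathrm{seq}}_\infty(\epsilon,\ell\circ\mathcal H,T)\leq \mathcal{N}^{\mathrm{seq}}_\infty(\epsilon/\rho,\mathcal H,T)$, then the entropy-to-fat-shattering and fat-shattering-to-Rademacher bounds yielding the $(\log T)^{3/2}$ overhead) is a faithful reconstruction of the actual Rakhlin--Sridharan proof being cited.
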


\begin{proof}
See \citet{rakhlin2014notes}.
\end{proof}

\subsection{Derivation of the Optimal $T$}\label{supp:proof:optimal-T}

In Sec. \ref{sec:optimal-path}, we show a variant of the generalization bound in \eqref{eq:simplified} as
\begin{equation}
\eps_{T}(h_T) \leq  \eps_{0}(h_0) \mathrm{+} \inf_{\mathcal{P}} \widetilde{\cO}\biggl(T\deltam \mathrm{+}\frac{T}{\sqrt n} \mathrm{+} \sqrt{\frac{1}{nT}}~\biggr)
\end{equation}
where $\deltam$ is an upper bound on the average $W_p$ distance between any pair of consecutive domains along the path, i.e., $\Delta_{\max} \geq \frac{1}{T}\sum_{t=1}^T W_p(\mu_{t-1}, \mu_t)$.

Given that $T,\deltam,n$ are all positive, we know there exists an optimal $T=T^*$ that minimizes the function
\begin{align}
    f(T)\coloneqq T\deltam \mathrm{+}\frac{T}{\sqrt n} \mathrm{+} \sqrt{\frac{1}{nT}}~ ,
\end{align}
and one can straightforwardly derive that
\begin{align}
    T^* = \left(\frac{1}{2(1+\deltam \sqrt{n}~)}\right)^{\frac 2 3}~.
\end{align}
\begin{proof}
The derivative of $f(T)$ is
\begin{align}
f'(T) = \deltam + \frac{1}{\sqrt n} - \frac{1}{2\sqrt{n}}T^{-\frac 3 2}~,
\end{align}
and the second-order derivative of $f(T)$ is
\begin{align}\label{eq:supp:optimal-T:2nd-grad}
f''(T) =   \frac{3}{4\sqrt{n}}T^{-\frac 5 2}~.
\end{align}
Eq. \eqref{eq:supp:optimal-T:2nd-grad} indicates that $f(T)$ is strictly convex in $T\in(0,\infty)$. Then, we only need to solve for the equation
\begin{align}
    f'(T) = 0 
\end{align}
as $T\in (0,\infty)$, which gives our the solution
\begin{align}
    T^* = \left(\frac{1}{2(1+\deltam \sqrt n~)}\right)^{\frac 2 3} ~.
\end{align}
\end{proof}

\section{Experimental Details}

\textbf{Implementation.} We adopt the code of \citep{kumar2020understanding} from \url{https://github.com/p-lambda/gradual_domain_adaptation}. We make necessary modifications to include two new datasets (Color-Shift MNSIT and CoverType) to the codebase. Also, we directly use the model structure and hyper-parameters provided in this codebase. 
\\

\textbf{Code.} Our code is provided in

\url{https://github.com/Haoxiang-Wang/gradual-domain-adaptation}.


\end{document}